\documentclass{article}

\usepackage{microtype}
\usepackage{graphicx}
\usepackage{subcaption}
\usepackage{booktabs} % for professional tables

\usepackage{hyperref}
\usepackage{multirow}
\usepackage{makecell}

\usepackage[accepted]{icml2026}

\def\eg{\emph{e.g.}}
\def\wrt{\emph{w.r.t.~}}

\usepackage{amsmath}
\usepackage{amssymb}
\usepackage{mathtools}
\usepackage{amsthm}

\usepackage[capitalize,noabbrev]{cleveref}

\theoremstyle{plain}
\newtheorem{theorem}{Theorem}[section]

\newtheorem{corollary}[theorem]{Corollary}
\theoremstyle{definition}
\newtheorem{definition}[theorem]{Definition}
\newtheorem{assumption}[theorem]{Assumption}
\theoremstyle{remark}
\newtheorem{remark}[theorem]{Remark}

\usepackage[textsize=tiny]{todonotes}

\icmltitlerunning{Surrogate Gradient Adaptation in Binary Neural Networks}

\begin{document}

\twocolumn[
\small
  \icmltitle{SURGE: Surrogate Gradient Adaptation in \\ Binary Neural Networks}

  % It is OKAY to include author information, even for blind submissions: the
  % style file will automatically remove it for you unless you've provided
  % the [accepted] option to the icml2026 package.

  % List of affiliations: The first argument should be a (short) identifier you
  % will use later to specify author affiliations Academic affiliations
  % should list Department, University, City, Region, Country Industry
  % affiliations should list Company, City, Region, Country

  % You can specify symbols, otherwise they are numbered in order. Ideally, you
  % should not use this facility. Affiliations will be numbered in order of
  % appearance and this is the preferred way.
  \icmlsetsymbol{equal}{*}
  \icmlsetsymbol{project_lead}{$\dagger$}
  \icmlsetsymbol{corr}{$\ddagger$}

  \begin{icmlauthorlist}
    \icmlauthor{Haoyu Huang}{ncee,equal}
    \icmlauthor{Boyu Liu}{iai,equal}
    \icmlauthor{Linlin Yang}{cuc,corr}
    \icmlauthor{Yanjing Li}{seie}
    \icmlauthor{Yuguang Yang}{seie}
    \icmlauthor{Xuhui Liu}{kaust}
    \icmlauthor{Canyu Chen}{ncee}
    %\icmlauthor{}{sch}
    \icmlauthor{Zhongqian Fu}{huawei,project_lead}
    \icmlauthor{Baochang Zhang}{iai}
    %\icmlauthor{}{sch}
    %\icmlauthor{}{sch}
  \end{icmlauthorlist}

  % \icmlaffiliation{yyy}{Department of XXX, University of YYY, Location, Country}
  % \icmlaffiliation{comp}{Company Name, Location, Country}
  % \icmlaffiliation{sch}{School of ZZZ, Institute of WWW, Location, Country}

  % \icmlaffiliation{\dagger}{Equal contribution. }
  % \icmlaffiliation{*}{National College for Excellent Engineers, Beihang University, Beijing, China}
  
  \icmlaffiliation{ncee}{National College for Excellent Engineers, Beihang University, Beijing, China}
  \icmlaffiliation{iai}{School of Artificial Intelligence, Beihang University, Beijing, China}
  \icmlaffiliation{cuc}{State Key Laboratory of Media Convergence and Communication, Communication University of China, Beijing, China}
  \icmlaffiliation{seie}{School of Electronic and Information Engineering, Beihang University, Beijing, China}
  \icmlaffiliation{kaust}{King Abdullah University of Science and Technology, Saudi Arabia}
  \icmlaffiliation{huawei}{Huawei Noah’s Ark Lab, China}
  
  % \icmlcorrespondingauthor{Baochang Zhang}{bczhang@buaa.edu.cn}
  \icmlcorrespondingauthor{Linlin Yang}{lyang@cuc.edu.cn}

  % You may provide any keywords that you find helpful for describing your
  % paper; these are used to populate the "keywords" metadata in the PDF but
  % will not be shown in the document
  \icmlkeywords{Quantization-Aware Training, Binary Neural Networks, Mode Quantization, Model Compression}

  \vskip 0.3in
]

% this must go after the closing bracket ] following \twocolumn[ ...

% This command actually creates the footnote in the first column listing the
% affiliations and the copyright notice. The command takes one argument, which
% is text to display at the start of the footnote. The \icmlEqualContribution
% command is standard text for equal contribution. Remove it (just {}) if you
% do not need this facility.

% Use ONE of the following lines. DO NOT remove the command.
% If you have no special notice, KEEP empty braces:
% \printAffiliationsAndNotice{}  % no special notice (required even if empty)
% Or, if applicable, use the standard equal contribution text:
\printAffiliationsAndNotice{
  \textsuperscript{*}Equal contribution.
  \textsuperscript{$\dagger$}Project lead.
  \textsuperscript{$\ddagger$}Corresponding author.
}

\begin{abstract}
  The training of Binary Neural Networks (BNNs) is fundamentally based on gradient approximation for non-differentiable binarization operations (\eg, \texttt{sign} function). However, prevailing methods including the Straight-Through Estimator (STE) and its improved variants, rely on hand-crafted designs that suffer from gradient mismatch problem and information loss induced by fixed-range gradient clipping. To address this, we propose SURrogate GradiEnt Adaptation (SURGE), a novel learnable gradient compensation framework with theoretical grounding. SURGE mitigates gradient mismatch through auxiliary backpropagation. Specifically, we design a Dual-Path Gradient Compensator (DPGC) that constructs a parallel full-precision auxiliary branch for each binarized layer, decoupling gradient flow via output decomposition during backpropagation. DPGC enables bias-reduced gradient estimation by leveraging the full-precision branch to estimate components beyond STE's first-order approximation. To further enhance training stability, we introduce an Adaptive Gradient Scaler (AGS) based on an optimal scale factor to dynamically balance inter-branch gradient contributions via norm-based scaling. Experiments on image classification, object detection, and language understanding tasks demonstrate that SURGE performs best over state-of-the-art methods. 
\end{abstract}

\section{Introduction}

% Deep neural networks (DNNs) have achieved remarkable success across various domains, but their escalating computational complexity and memory requirements pose significant challenges for deployment in resource-limited scenarios.
% To address this challenge, numerous model compression techniques have been developed to enhance deployment efficiency, including low-bit quantization, pruning, knowledge distillation, and low-rank decomposition.
% In particular, quantization achieves compression not by altering model structures but through bit-width reduction, enabling hardware-efficient deployment while preserving the original network architecture.
Deep neural networks (DNNs) have achieved remarkable success across various domains \citep{he2016deep_resnet, vaswani2017attention}, with model parameters scaling from millions to billions in state-of-the-art architectures \citep{brown2020language, yang2024qwen2}.
% , including computer vision, natural language processing, and speech recognition.
However, their escalating computational complexity and memory requirements pose significant challenges for deployment in resource-limited scenarios.
% such as mobile platforms and embedded systems.
To address this challenge, numerous model compression techniques have been developed to enhance deployment efficiency \citep{he2023structured, hinton2015distilling, liuefficient, yu2017compressing}, each offering distinct trade-offs among compression ratio, inference speedup, and accuracy retention.
Different from structural compression methods (\eg, pruning), quantization \citep{esser2019lsq_cq, hubara2021accurate, Wang_2022_CVPR_cq, xu2023q} achieves compression through bit-width reduction without modifying the network architecture.
The reduced bit-width representation significantly decreases storage requirements while enabling computational acceleration via low-precision operations.
% This approach enables hardware-efficient deployment while preserving the original network architecture, making it highly compatible with existing deep learning frameworks and specialized accelerators.

As an extreme form of quantization, binarization \citep{courbariaux2015binaryconnect_bnn, courbariaux2016binarized_bnn, gong2019differentiable_approx, xu2021fda_bnn, xu2022recurrent_rbonn} represents weights and activations with 1-bit values, theoretically enabling $32 \times$ memory reduction and $58 \times$ computational acceleration compared to full-precision networks \citep{rastegari2016xnor_bnn}.
These efficiency advantages of binarization make it especially practical for edge computing devices with severely limited computational resources, and its effectiveness has been proven in diverse tasks, such as classification \citep{xu2021recu_bnn}, object detection \citep{xu2022ida_det}, and natural language understanding \citep{qinbibert}.

\begin{figure*}[t]
    \centering
    \includegraphics[width=1.0\linewidth]{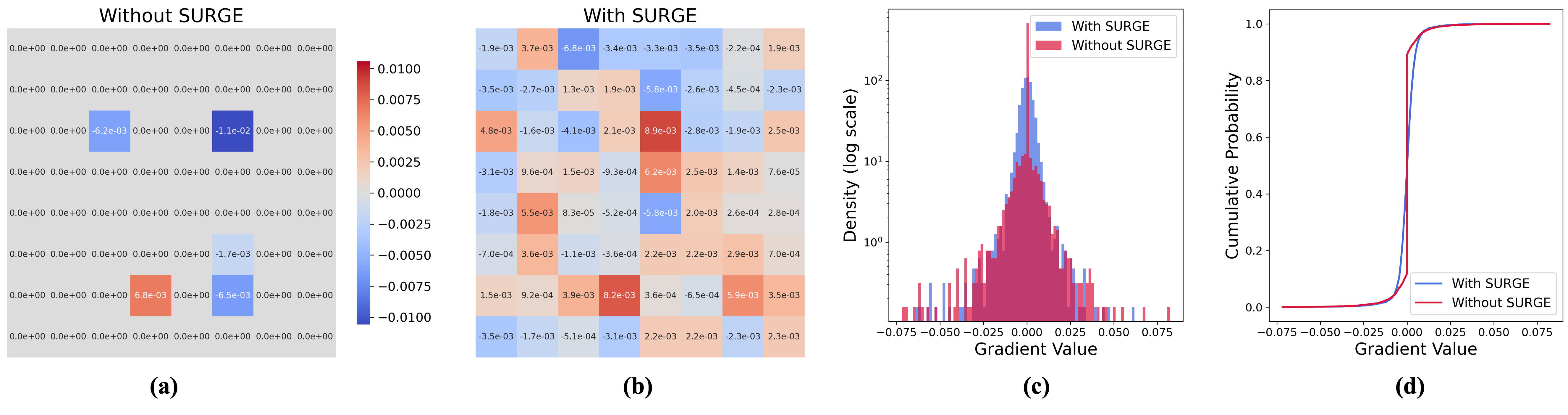}
    \caption{(a-b) Activation gradient patterns without/with SURGE (left/right); (c) Gradient distribution comparison; (d) Cumulative probability of gradients. STE provides a first-order approximation for the \texttt{sign} function's gradient and clips out-of-range activation gradients, while SURGE compensates them with a Dual-Path Gradient Compensator (a-b). SURGE also right-shifts gradient distributions of activations (c-d), validating its effectiveness in rectifying STE-induced mismatch. 
    % The shifted patterns correlate with improved parameter updates in binarized layers.
    }
    \label{fig:teaser}
\end{figure*}

Despite considerable advances, there remains a non-negligible performance gap between binary neural networks (BNNs) and their full-precision counterparts \citep{rastegari2016xnor_bnn}. This discrepancy primarily stems from the substantial representation divergence between binary and continuous-valued weights and activations.
% To address this issue, quantization-aware training (QAT) has emerged as the standard approach to optimize binary neural networks.
Specifically, the training of BNNs incorporates quantization of real-valued tensors with deterministic or stochastic binarization operations \citep{courbariaux2016binarized_bnn}. However, the non-differentiable nature and vanishing gradients of binarization operations introduce significant challenges in backpropagation.

To solve the training problem, the Straight-Through Estimator (STE) \citep{bengio2013ste_approx} provides an effective gradient approximation method for binarization operations. Specifically, STE directly substitutes the gradients of binarization operations (\eg, \texttt{sign} function) with the derivative of the \texttt{Identity} function during backpropagation, thereby enabling stable parameter optimization. Despite its prevalent application in training BNNs and low-bit networks, STE suffers from several inherent limitations that remain to be addressed. On the one hand, since the \texttt{sign} function's gradient vanishes everywhere except at zero, employing a fixed-value gradient approximation inevitably introduces estimation bias and optimization instability \citep{qin2020forward_irnet_bnn}. To reduce the gradient error of STE, subsequent approaches predominantly rely on heuristic quantizer designs \citep{liu2019circulant, gong2019differentiable_approx}, such as piecewise polynomial functions \citep{liu2018bireal_bnn} and SignSwish activation functions \citep{darabi2018bnn+}, which cannot guarantee finding the optimal gradient approximation.

% We conduct ablation studies where DPGC exclusively compensates: (1) the gradients clipped by STE, and (2) the unclipped gradients after STE's truncation. These configurations yield performance gains of +0.26\% and +0.18\% respectively compared to the baseline, demonstrating the gradient compensation benefits of DPGC.
On the other hand, during the backpropagation of STE, the gradient clipping is adopted to only preserve the gradient for inputs within the vicinity of zero (typically $[-1,1]$), which empirically improves model accuracy \citep{courbariaux2016binarized_bnn}.
% 另一方面，在采用STE训练BNNs时，一个通用的做法是使用clip函数($\text{clip}(\cdot, -1, +1)$)在0的邻域（数值小于1的地方）允许梯度传播，以获得更好的网络性能。
However, applying fixed-range gradient clipping is suboptimal for binarized representations, particularly for activation quantization, since the gradient information is discarded for values outside the clipping range \citep{qin2020forward_irnet_bnn}.
Existing binarization methods largely overlook the impact of gradient clipping range, as only a few studies propose handcrafted asymptotic functions to gradually approximate the hard binarization function \citep{gong2019differentiable_approx, qin2020forward_irnet_bnn}.
% \citep{gong2019differentiable_approx, liu2020reactnet, xu2021fda_bnn}
% handicraft asymptotic
% To solve this issue, DSQ designs a differentiable asymptotic function that gradually approximates the hard binarization function during training, dynamically narrowing the gradient propagation range across epochs.
% 然而，固定的梯度传播范围对于表征而言并不是最优的选择（DSQ），dsq是慢慢缩小区间，我们是考虑所有区间，提供必要的梯度
Consequently, merely employing STE and improved estimators \citep{rastegari2016xnor_bnn, gong2019differentiable_approx, xu2022recurrent_rbonn, jin2025parq} fails to obtain accurate gradient approximation for binarization operations, as non-negligible gradient mismatch \citep{qin2020forward_irnet_bnn} accumulates in the backward pass, necessitating explicit gradient rectification.

This paper proposes SURrogate GradiEnt Adaptation (SURGE), a novel learnable gradient compensation strategy that addresses gradient mismatch through auxiliary backpropagation. While STE or improved estimators provides surrogate gradients for binarization operations, SURGE offers enhanced gradient adaptation for binary neural networks. Specifically, we design a Dual-Path Gradient Compensator (DPGC), which constructs a parallel full-precision parameterized branch (noted as auxiliary branch) for each binarized layer (noted as main branch).
% to compensate for the gradient errors induced by STE.
In particular, DPGC decomposes each layer's output into contributions from the main branch and auxiliary branch, thus decoupling the gradient flow into two parts during backpropagation. Therefore, DPGC ensures that the auxiliary branch only affects the backward gradient while preserving the original layer outputs during the forward pass.
% fp unbiased
Compared with the binary branch, the full-precision branch can provide less biased gradients \citep{stock2021training} that compensate for STE's first-order approximation \citep{liu2023bridging} error by learning higher-order terms.
% As shown in Figure~\ref{fig:teaser}, the auxiliary branch is designed for  bias-reduced gradient estimation and eliminates fixed-range gradient clipping constraints in the binary branch.
As shown in Figure~\ref{fig:teaser}, \textbf{(a)} STE’s fixed clipping \emph{zeros vast area} of activation gradients; \textbf{(b)} with SURGE, the auxiliary branch injects compensation gradients while keeping the forward output unchanged, visibly recovering the clipped regions. Aggregated statistics in \textbf{(c)}–\textbf{(d)} show a right-shifted gradient distribution and heavier tails in the cumulative curves, indicating that SURGE restores informative gradients beyond STE’s first-order surrogate.

% permitting adaptive adjustment of clipping boundaries during activation quantization.
% 过大的辅助分支梯度可能影响
% Moreover, improperly initialized auxiliary branches may impair the convergence of the main branch.
% ablation
% We also conduct ablation studies where DPGC exclusively compensates: (1) gradients clipped by STE, and (2) gradients preserved after STE truncation. These configurations achieve performance improvements over baseline, demonstrating DPGC's compensation benefits on both gradient groups.

Moreover, large-magnitude gradients from the auxiliary path may adversely affect the convergence of the main branch.
To address this problem, we propose an Adaptive Gradient Scaler (AGS) that dynamically balances inter-branch gradient contributions via norm-based scaling, thereby ensuring stable and effective compensation. To validate the effectiveness of SURGE, we conduct comprehensive comparative experiments on two image classification benchmarks, one object detection benchmark, one suite of language understanding benchmark, and our proposed method achieves best performance over state-of-the-art. In summary, the main contributions of this work are as follows:
\begin{itemize}
    \item We propose SURrogate GradiEnt Adaptation (SURGE), a novel gradient compensation framework employing a Dual-Path Gradient Compensator to address gradient mismatch. Our method does not modify the forward-pass output and introduces no additional overhead at inference.
    \item  We introduce an Adaptive Gradient Scaler (AGS) that dynamically equilibrates gradient contributions from binary and auxiliary branches  based on theoretically derived optimal scaling factor. 
    % We also conduct a mathematical analysis to investigate the role of AGS. 
    \item Extensive experiments demonstrate that SURGE achieves state-of-the-art performance across four standard benchmarks for BNN training. Specifically, a SURGE-trained binarized ResNet-18 attains 62.0\% top-1 accuracy on ImageNet with one-stage training, surpassing previous SOTA methods by significant margins (\eg, +1.0\%, and +3.9\% top-1 accuracy improvements over ReCU and IR-Net, respectively, on ImageNet).
\end{itemize}

\section{Related Work}

\subsection{Gradient Approximation}
% 早期针对不可微操作的梯度近似采用手工设计的方法。Bengio提出的直通估计器STE通过非可微层直接传递梯度，Roth提出ProxQuant使用近端算子近似。然而这些固定近似方法无法适应输入分布而导致梯度偏差（or sub-optimal performance）
Gradient approximation serves as a cornerstone for training neural networks with non-differentiable operators, addressing challenges in discrete sampling~\citep{sutton1999policy_discrete, schulman2015gradient_discrete, athalye2018obfuscated_discrete, rezende2014stochastic_discrete}, architecture search~\citep{xie2018snas_nas, liu2018darts_nas, cai2018proxylessnas_nas}, and especially quantization~\citep{esser2019learned_lsq, gong2019differentiable_approx, liu2018bireal_bnn, liu2020reactnet, xu2022recurrent_rbonn}.
A popular family of gradient estimators is the Straight-Through Estimator (STE), which directly propagates gradients through non-differentiable functions. The idea of Straight-Through originates from the perceptron+ algorithm \citep{rosenblatt1957perceptron}, which leverages a modified chain rule and utilizes the \texttt{Identity} function as the proxy of the original derivative of a binary output function. \citep{bengio2013ste_approx} improves this method by using non-linear functions like sigmoid, and \citep{jang2016categorical_gumbel} further incorporates the Gumbel reparameterization, reparameterizes discrete variables via temperature-annealed continuous relaxation, enabling low-variance gradient estimation for categorical sampling. 
In the field of quantization, DSQ~\citep{gong2019differentiable_approx} employs parameterized sigmoid functions to progressively approximate the gradients of the non-differentiable quantization function, while LSQ~\citep{esser2019learned_lsq} introduced scaling factors for end-to-end gradient propagation, advancing low-bit quantization. BONN~\citep{zhao2022towards_BONN} integrates Bayesian optimization to guide differentiable binarization policies, and FDA-BNN~\citep{xu2021fda_bnn} converts the \texttt{sign} function into the frequency domain to mitigate the gradient mismatch.

\subsection{Binary Neural Network}

Pioneering works in binary neural networks focused either on binarization architecture design~\citep{liu2018bireal_bnn, xu2021fda_bnn, liu2020reactnet, bulat2020bats_nas_bnn, yang2020searching_slb} or training strategies~\citep{courbariaux2015binaryconnect_bnn, rastegari2016xnor_bnn, qin2020forward_irnet_bnn, xu2021recu_bnn, xu2022recurrent_rbonn}. 
In terms of architecture design, Bi-Real Net~\citep{liu2018bireal_bnn} enhances skip connections, and FDA-BNN~\citep{xu2021fda_bnn} introduces differentiable binarization units in the frequency domain. Moreover, ReActNet~\citep{liu2020reactnet} substitutes the \texttt{sign} function and PReLU~\citep{he2015delving_prelu} with RSign and RPReLU based on learnable thresholds. Approaches like BATS~\citep{bulat2020bats_nas_bnn} and SLB~\citep{yang2020searching_slb} combine BNNs with neural architecture search.
In terms of training strategies, BinaryConnect~\citep{courbariaux2015binaryconnect_bnn} and XNOR-Net~\citep{rastegari2016xnor_bnn} use the \texttt{sign} function with gradient approximation, but they cause severe information loss in forward propagation. Later, training strategies were innovated. IR-Net~\citep{qin2020forward_irnet_bnn} and ReCU~\citep{xu2021recu_bnn} use progressive quantization and feature distribution alignment, but they still face gradient mismatch in deep networks. RBONN~\citep{xu2022recurrent_rbonn} introduces a recurrent bilinear optimization for BNNs. 

Unlike prior work, our work is the first attempt to employ a Dual-Path Gradient Compensator to correct gradient mismatch in STE-based binarized networks, coupled with an Adaptive Gradient Scaler to equilibrate the gradient contribution between binary and auxiliary branches dynamically.

\begin{figure*}[t]
    \centering
    \includegraphics[width=1.0\linewidth]{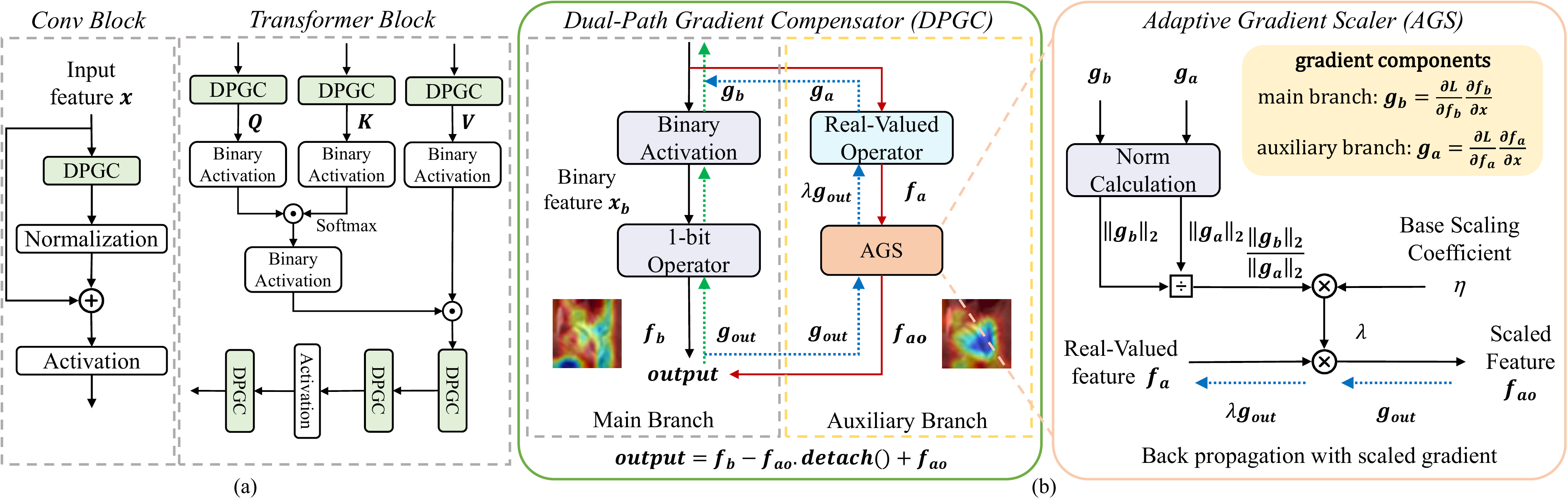}
    \caption{Overall architecture of SURGE. \textbf{(a)} Integration into common backbones (left: convolution block; right: transformer block). For visual clarity, residual connections are omitted in the Transformer block. \textbf{(b)} Component details. DPGC constructs a parallel full-precision parameterized branch (auxiliary branch, shown with red arrows for forward pass and blue arrows for backpropagation) for each binarized layer (main branch, represented by black arrows in forward pass and green arrows for backpropagation). This ensures identical output to standard BNNs while providing less biased gradients for compensation. AGS takes gradients from both branches as input (visualized through corresponding colored arrows) and dynamically balances inter-branch gradient contributions via norm-based scaling. SURGE is architecture-agnostic and applies to arbitrary binarized linear operators.}
    \label{fig:grad_comp}
\end{figure*}

\section{Preliminaries}
Consider a neural layer with weight vector $W \in \mathbb{R}^d$ and input vector $x \in \mathbb{R}^d$. The main operation in deep neural networks is expressed as:
\begin{equation}
f(x;W) = W^\top x.
\end{equation}

In binary neural networks (BNNs), we quantize $W$ and $x$ to $\{-1, +1\}^d$, thus using the efficient XNOR and Bit-count operations to replace real-valued operations. Let $ \mathbf{B}_W \in \{-1, +1\}^{d} $ and $ \mathbf{B}_{x} \in \{-1, +1\}^{d} $ denote the binarized counterparts. Network binarization aims to represent the floating-point weights and/or activations with 1 bit. In general, the quantization can be formulated as: $Q_x(x)=\alpha_x \mathbf{B}_x, Q_W(W)=\alpha_W\mathbf{B}_W$, where $\alpha_\cdot$ denotes scalars for binary values including $\alpha_w$ for weights and $\alpha_x$ for inputs. And we usually use \texttt{sign} function to binarize $W$ and $x$: $\mathbf{B}_x=\texttt{sign}(x), \mathbf{B}_W=\texttt{sign}(W)$. Following \cite{rastegari2016xnor_bnn}, the binary operation is formulated as:

\begin{equation}
f_{b}(x;\mathbf{B}_W) = Q_W(W)^\top Q_x(x) = \alpha_W \alpha_x \cdot (\mathbf{B}_W \odot \mathbf{B}_x) ,
\end{equation}

where $\odot$ denotes the inner product for vectors with bitwise operations XNOR and Bitcount.

In backpropagation, the derivative of the \texttt{sign} function is zero almost everywhere, which makes it incompatible with backpropagation, since exact gradients for the original values before binarization would be zeroed. Thus, Straight-Through Estimator (STE) \citep{bengio2013ste_approx} is generally used to train BNNs, which propagates the gradient through \texttt{Identity} function. Regarding the gradient of the loss $L$ \wrt $W$, it is approximated as 
\begin{equation}
    \frac{\partial L}{\partial W} = \frac{\partial L}{\partial \mathbf{B}_W} \cdot \frac{\partial \mathbf{B}_W}{\partial W} \approx \frac{\partial L}{\partial \mathbf{B}_W}.
\end{equation}
As for the gradient \wrt the activations, it can be formulated as 
\begin{equation}
    \frac{\partial L}{\partial x} = \frac{\partial L}{\partial \mathbf{B}_x} \cdot \frac{\partial \mathbf{B}_x}{\partial x} \approx \frac{\partial L}{\partial \mathbf{B}_x} \cdot \mathbf{1}_{\{|x| \leq 1\}},
\end{equation}
where $\mathbf{1}_{\{|x| \leq 1\}}$ is the indicator function that equals 1 when $|x| \leq 1$ and 0 otherwise. This expression corresponds to STE's first-order approximation for the sign function's gradient.

\section{Methodology}
\label{sec:method}
In this section, we describe SURGE in detail. We first introduce the Dual-Path Gradient Compensator (DPGC) module to address the gradient mismatch in STE-based training (Sec.~\ref{subsec:DPGC}), then present the Adaptive Gradient Scaler (AGS) for stable optimization (Sec.~\ref{subsec:scaling}). The complete training paradigm integrates these components while preserving standard BNN inference. 

\subsection{Dual-Path Gradient Compensator (DPGC)}
\label{subsec:DPGC}

To handle the intrinsic gradient mismatch in STE \citep{qin2020forward_irnet_bnn}, we propose a layer-wise dual-path architecture that preserves original forward computations while introducing auxiliary gradient pathways. As shown in Figure~\ref{fig:grad_comp}, DPGC constructs a parallel full-precision parameterized branch (noted as auxiliary branch) including a full-precision operator (\eg, convolution, linear, attention projection) with identical dimensions (\eg, kernel size, dimension) to the main branch, augmented with an Adaptive Gradient Scaler (AGS) module (Section~\ref{subsec:scaling}) for each binarized layer (noted as main branch). DPGC decomposes each layer's output into contributions from both the main branch (black arrow) and auxiliary branch (red arrow), thus decoupling the gradient flow into two parts during backpropagation (Eq.~\ref{eq:gradient_decouple}) (green arrow for main branch, blue arrow for auxiliary branch). Given $f_b(x) = Q_W(W_b)^\top Q_x(x)$ as the binarized computation, $f_a(x) = W_a^\top x$ as the full-precision computation, $W_a, W_b$ as the weight parameters for auxiliary branch (full-precision branch), main branch (binary branch), respectively, the combined output is:

\begin{equation}
    \textit{output} = \underbrace{f_b(x;W_b)}_{\text{Binary output}} - \underbrace{f_{ao}(x;W_a)\downarrow}_{\text{Detached compensator}} + \underbrace{f_{ao}(x;W_a)}_{\text{Active compensator}},
\end{equation}

where $f_{ao}(x) = \lambda f_a(x)$ is the scaled full-precision computation, $\lambda$ is the scale factor, $\downarrow$ is the gradient stop operator. 
The design ensures identical outputs to standard BNNs, while gradients flow through both pathways, thus providing less biased gradient estimates \citep{stock2021training} while preserving STE gradients. Upon completion of training, the auxiliary branch can be discarded, introducing no additional computational overhead during inference. Backpropagation aggregates gradients from both paths: 

\begin{equation}
\label{eq:gradient_decouple}
    \frac{\partial\mathcal{L}}{\partial x} = \underbrace{\frac{\partial\mathcal{L}}{\partial f_b}\frac{\partial f_b}{\partial x}\Big|_{\text{STE}}}_{\text{Binary gradients $g_b$}} + \lambda \underbrace{\frac{\partial\mathcal{L}}{\partial f_{a}}\frac{\partial f_{a}}{\partial x}}_{\text{Compensator gradients $g_a$}}.
\end{equation}

% Here, $g_b$ retains STE's first-order approximation \citep{liu2023bridging}, while $g_a$ from the full precision auxiliary branch captures higher-order terms and retains more gradient information excluded by clipping. 
% Here, $\left.\frac{\partial f_b}{\partial x}\right|_{\text{STE}}$ indicates the STE surrogate for the theoretically unavailable binary-branch derivative \citep{liu2023bridging}. Thus, $g_b$ is a first-order gradient approximation, while the full-precision auxiliary branch yields $g_a$ that captures higher-order terms and recovers gradients removed by clipping.
Here, $\left.\frac{\partial f_b}{\partial x}\right|_{\text{STE}}$ serves as the STE surrogate for the theoretically intractable binary-branch derivative \citep{liu2023bridging}. The term $g_b$ is thereby a first-order approximation, while the full-precision auxiliary branch yields $g_a$ to capture higher-order terms and recover gradients removed by clipping.

\subsection{Adaptive Gradient Scaler (AGS)}
\label{subsec:scaling}

The raw combination of $g_b$ and $g_a$ risks unstable training due to varying magnitude ratios between paths, and the large-magnitude gradients from the auxiliary path may adversely affect the convergence of the main branch. We address this through a novel mechanism that dynamically balances inter-branch gradient contributions with norm-based adaptive scaling factor $\lambda_{\mathrm{AGS}}$, thereby ensuring stable and effective compensation: 

% \begin{equation}
%     \tilde{g} = g_b + \lambda g_a, \quad \lambda^* = \eta\frac{\|g_b\|_2}{\|g_a\|_2 + \epsilon},
% \end{equation}

\begin{equation}
\frac{\partial\mathcal{L}}{\partial x} = g_b + \lambda_{\mathrm{AGS}} \cdot g_a,\quad 
\lambda_{\mathrm{AGS}} := \eta\frac{\|g_b\|_2}{\|g_a\|_2 + \epsilon},
\label{eq:ags_rule}
\end{equation}

where $\eta$ is the base scaling coefficient, $\epsilon = 10^{-8}$ is the numerical stabilizer, and $\lambda_{\mathrm{AGS}}$ is a practical plug-in approximation of the theoretical optimum (Theorem~\ref{thm:optimal-lambda}, Corollary~\ref{cor:norm-ratio}). This dynamic scaling preserves the directional consistency of the primary binary gradient $g_b$ while allowing auxiliary gradients $g_a$ to provide magnitude-aware compensation. 
% Specifically, the unit vector alignment $\frac{g_a}{\|g_a\|}$ remains unaffected, ensuring auxiliary updates only fine-tune the optimization trajectory rather than overriding the main gradient direction. 
Such design guarantees that the STE-based gradients dominate the parameter update process, while the auxiliary path serves as an adaptive compensator that injects higher-order gradient information without destabilizing the primary learning dynamics. In practice, the scale factor derived from gradient computation in the current iteration is used in the subsequent AGS step for adaptive parameter adjustment to optimize computational efficiency. The complete training procedure is summarized in Appendix~\ref{appendix:training_procedure}. 

% The scaling factor dynamically amplifies gradients of auxiliary branch when $\|g_a\| \ll \|g_b\|$, and suppresses interference of auxiliary branch when $\|g_a\| \gg \|g_b\|$.

\section{Theoretical Analysis}
\label{sec:theory}
This section formally establishes the theoretical foundation of gradient compensation in dual-path architectures. We begin by formulating the gradient propagation mechanism under our proposed compensation framework, followed by introducing moment-based notation for gradient statistics (Definition~\ref{def:grad_stats}) and a moment model that captures the bias/noise structure of the two gradient components (Assumption~\ref{ass:moment_model}). We then derive the theoretically optimal scaling factor for gradient compensation (Theorem~\ref{thm:optimal-lambda}) and obtain a practical norm-ratio approximation that directly motivates the AGS update rule (Corollary~\ref{cor:norm-ratio}).
% followed by rigorous definitions of gradient components to characterize their statistical properties (Definition~\ref{def:grad_stats}). Finally, we derive the theoretically optimal scaling factor for gradient compensation and demonstrate its alignment with our adaptive scaling strategy (Theorem~\ref{thm:optimal-lambda}).

Let $\mathcal{X}$ denote the input space and $W = (W_b, W_a) \in \mathbb{R}^{2d}$ represent the binarized and full-precision weights. The forward propagation becomes:

\begin{equation}
\begin{aligned}
    &f(x;W) = \\&\underbrace{Q_W(W_b)^\top Q_x(x)}_{\text{Binary path}} + \lambda\left(\underbrace{W_a^\top x}_{\text{Compensator path}} - \underbrace{W_a^\top x \downarrow}_{\text{Detached path}}\right),
\end{aligned}
\end{equation}

where $\lambda$ follows the adaptive scaling in Section~\ref{subsec:scaling}. Let $Approx$ denote a kind of STE-based gradient approximation (\eg, STE), the composite gradient combines:

\begin{equation}
    \frac{\partial\mathcal{L}}{\partial x} = \underbrace{\frac{\partial\mathcal{L}}{\partial f_b}\frac{\partial f_b}{\partial x}\Big|_{Approx}}_{g_b} + \underbrace{\frac{\partial\mathcal{L}}{\partial f_{ao}}\frac{\partial f_{ao}}{\partial x}}_{\lambda g_a}.
\end{equation}

% \begin{definition}[Gradient Components]
% \label{def:gradient_components}
% Assuming there exists $g^*$ being a better gradient, the empirical gradients satisfy:
% \begin{align}
%     &\mathbb{E}[g_b] = g^* - \delta_b,\quad \|\delta_b\|_2 \leq C\sqrt{d},   \\
%     % \mathbb{E}[g_c] &= g^* \quad (\text{Unbiased estimate}) \\
%     &\mathrm{Var}(g_b) = \sigma_b^2 I_d,\quad \mathrm{Var}(g_a) = \sigma_a^2 I_d,
% \end{align}
% where $C$ is a constant, which depends on the activation distribution. $\mathbb{E}[\cdot]$ and $\mathrm{Var}(\cdot)$ denote expectation and variance. $\delta_b$ represents the $Approx$-induced error vector, $\sigma_b$ and $\sigma_a$ are the standard deviations of gradient noise in binarized and compensator paths, respectively. $I_d$ is the $d$-dimensional identity matrix. The full list of assumptions underlying this definition is provided in Appendix~\ref{assumptions}.
% \end{definition}

\begin{definition}[Notation for gradient statistics]
\label{def:grad_stats}
Let $\mu_b := \mathbb{E}[g_b]$, $\mu_a := \mathbb{E}[g_a]$.
Define the approximation-induced bias vector of the baseline surrogate as
$\delta_b := g^* - \mu_b$.
\end{definition}

\begin{assumption}[Moment model for gradient components]
\label{ass:moment_model}
Let $\mu_b:=\mathbb{E}[g_b]$ and $\mu_a:=\mathbb{E}[g_a]$.
There exists an ideal (unobserved) reference gradient $g^*$ such that
\[
\mu_b = g^* - \delta_b,\qquad \|\delta_b\|_2 \le C\sqrt{d},
\]
and $g_b,g_a$ have finite second moments.
\end{assumption}
\noindent The definition of $g^*$ and the noise structure assumptions are detailed in Appendix~\ref{assumptions}.

% \begin{theorem}[Optimal Scaling Factor]
% \label{thm:optimal}
% Given numerical stabilizer $\epsilon = 10^{-8}$, the optimal scaling factor $\lambda^*$, minimizing total error $ \mathbb{E}[\|\tilde{g} - g^*\|^2]$, can be approximated by multiplying a small constant $\eta$ with 
% the fraction $\frac{\|g_b\|}{\|g_a\|}$ as below:
% \begin{equation}
% \label{eq:optimal}
%     \lambda^* = \frac{\langle \delta_b, \mathbb{E}[g_a] \rangle} {\|\mathbb{E}[g_a]\|^2 + d\sigma_a^2} \approx \eta \frac{\|g_b\|}{\|g_a\| + \epsilon}.
% \end{equation}
% The approximation replaces expectations with empirical mini-batch estimates.
% \end{theorem}

\begin{theorem}[Optimal scaling factor]
\label{thm:optimal-lambda}
Let $\tilde g(\lambda):= g_b+\lambda g_a$.
Under Assumption~\ref{ass:moment_model}, assume $\mu_a:=\mathbb{E}[g_a]$ exists and
$\mathrm{Var}(g_a)$ has finite trace.
Assume additionally that the mini-batch noises in $g_b$ and $g_a$ are uncorrelated in the dot-product sense:
\[
\mathbb{E}\big[(g_b-\mathbb{E}[g_b])^\top (g_a-\mathbb{E}[g_a])\big]=0 .
\]
Then any minimizer of $\mathbb{E}\|\tilde g(\lambda)-g^*\|_2^2$ is
\[
\lambda^* \;=\;
\frac{\langle \delta_b,\ \mu_a\rangle}{\|\mu_a\|_2^2+\mathrm{tr}(\mathrm{Var}(g_a))}.
\]
If $\|\mu_a\|_2^2+\mathrm{tr}(\mathrm{Var}(g_a))>0$, this minimizer is unique.
In particular, under the isotropic noise model $\mathrm{Var}(g_a)=\sigma_a^2 I_d$,
\[
\lambda^*=\frac{\langle \delta_b,\ \mu_a\rangle}{\|\mu_a\|_2^2+d\sigma_a^2}.
\]
\end{theorem}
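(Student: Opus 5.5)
The plan is to write the objective $J(\lambda):=\mathbb{E}\|\tilde g(\lambda)-g^*\|_2^2$ explicitly as a quadratic polynomial in $\lambda$ and minimize it directly. I will treat $g^*$ as a fixed (deterministic) reference vector, as in Assumption~\ref{ass:moment_model}, and $\lambda$ as a deterministic scalar. I will center both gradient components by setting $n_b:=g_b-\mu_b$ and $n_a:=g_a-\mu_a$, so that $\mathbb{E}[n_b]=\mathbb{E}[n_a]=0$. By Definition~\ref{def:grad_stats}, $\mu_b-g^*=-\delta_b$, which gives the decomposition
\[
\tilde g(\lambda)-g^* \;=\; (\lambda\mu_a-\delta_b) \;+\; (n_b+\lambda n_a).
\]
The first bracket is deterministic and the second has mean zero.

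Next I expand the squared norm and take expectations. The finite second moments in Assumption~\ref{ass:moment_model}, together with the finite trace of $\mathrm{Var}(g_a)$, make every term integrable. The cross term $2\langle \lambda\mu_a-\delta_b,\ \mathbb{E}[n_b+\lambda n_a]\rangle$ vanishes because the noise has zero mean. I also use $\mathbb{E}\|n_a\|_2^2=\mathrm{tr}(\mathrm{Var}(g_a))$ and $\mathbb{E}\|n_b\|_2^2=\mathrm{tr}(\mathrm{Var}(g_b))$. The mixed noise term $2\lambda\,\mathbb{E}[n_b^\top n_a]$ is zero by the dot-product uncorrelatedness hypothesis. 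The result is
\[
J(\lambda)=\lambda^2\bigl(\|\mu_a\|_2^2+\mathrm{tr}(\mathrm{Var}(g_a))\bigr)-2\lambda\langle\delta_b,\mu_a\rangle+\|\delta_b\|_2^2+\mathrm{tr}(\mathrm{Var}(g_b)).
\]

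Call the leading coefficient $A:=\|\mu_a\|_2^2+\mathrm{tr}(\mathrm{Var}(g_a))$. If $A>0$, then $J$ is a strictly convex parabola. Its unique minimizer solves $J'(\lambda)=2A\lambda-2\langle\delta_b,\mu_a\rangle=0$, which is exactly the stated $\lambda^*$, and this also gives uniqueness. The isotropic case follows by substituting $\mathrm{tr}(\sigma_a^2 I_d)=d\sigma_a^2$.

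The only delicate point is the degenerate case $A=0$. Then $\mu_a=0$ and $g_a=0$ almost surely, so $J$ is constant and every $\lambda$ is a minimizer. The formula reduces to the indeterminate form $0/0$, so in that case the statement should be read as vacuous. I will state this explicitly, so that the claim ``any minimizer is $\lambda^*$'' is asserted substantively only when $A>0$. Apart from this, the main care needed is bookkeeping: confirming that the uncorrelatedness hypothesis is used only in the dot-product form stated and that no coordinate-wise independence is required.
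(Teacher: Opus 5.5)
Your proposal is correct and follows the same route as the paper's proof. Both expand $\mathbb{E}\|\tilde g(\lambda)-g^*\|_2^2$ as a bias--variance decomposition, drop the noise cross term using the dot-product uncorrelatedness hypothesis, substitute $\mathbb{E}[g_b]=g^*-\delta_b$ to get a quadratic in $\lambda$, and set its derivative to zero. Your separate handling of the degenerate case $\|\mu_a\|_2^2+\mathrm{tr}(\mathrm{Var}(g_a))=0$ (where $g_a=0$ almost surely, every $\lambda$ is a minimizer, and the formula becomes $0/0$) covers a point the paper's proof passes over without comment.
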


% \begin{theorem}[Optimal scaling factor]
% \label{thm:optimal-lambda}
% Let $\tilde g(\lambda):= g_b+\lambda g_a$.
% Under Assumption~\ref{ass:moment_model} and the isotropic noise model in Appendix~\ref{assumptions}
% \[
% \mathbb{E}[g_b]=g^*-\delta_b,\quad \mathrm{Var}(g_a)=\sigma_a^2 I_d,\quad \mathrm{Var}(g_b)=\sigma_b^2 I_d
% \]
% assume additionally that the mini-batch noises in $g_b$ and $g_a$ are uncorrelated in the dot-product sense:
% \[
% \mathbb{E}\big[(g_b-\mathbb{E}[g_b])^\top (g_a-\mathbb{E}[g_a])\big]=0 .
% \]
% Then the unique minimizer of $\mathbb{E}\|\tilde g(\lambda)-g^*\|_2^2$ is
% \begin{equation}
% \begin{aligned}
% \lambda^* \;=\;&
% \frac{\langle \delta_b,\ \mu_a\rangle}{\|\mu_a\|_2^2+\mathrm{tr}(\mathrm{Var}(g_a))}
% \;=\;
% \frac{\langle \delta_b,\ \mu_a\rangle}{\|\mu_a\|_2^2+d\sigma_a^2},
% \\ \mu_a:&=\mathbb{E}[g_a].
% \end{aligned}
% \end{equation}
% \end{theorem}

\begin{corollary}[Practical norm-ratio approximation]
\label{cor:norm-ratio}
In addition, suppose during the main training phase (after a short transient):
(i) the alignment
$\cos\theta:=\frac{\langle \delta_b,\mu_a\rangle}{\|\delta_b\|_2\|\mu_a\|_2}\approx c_\theta$
is approximately stable,
(ii) the \emph{relative bias ratio}
$\beta:=\frac{\|\delta_b\|_2}{\|\mu_b\|_2}$ with $\mu_b:=\mathbb{E}[g_b]$
is bounded and slowly varying, so $\beta\approx \kappa$,
and (iii) the noise ratio $\rho:=\frac{d\sigma_a^2}{\|\mu_a\|_2^2}$ is approximately stable.
Then
\[
\lambda^* \approx \eta\frac{\|\mu_b\|_2}{\|\mu_a\|_2},
\qquad
\eta:=\frac{\kappa c_\theta}{1+\rho}.
\]
Replacing population quantities by mini-batch estimates and adding a numerical stabilizer $\epsilon>0$ yields the AGS rule
\[
\lambda_{\mathrm{AGS}} \;:=\; \eta\frac{\|g_b\|_2}{\|g_a\|_2+\epsilon}.
\]
\end{corollary}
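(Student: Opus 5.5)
The corollary follows by substituting the three ratio quantities (i)--(iii) into the closed form of Theorem~\ref{thm:optimal-lambda}. We start from the isotropic formula
\[
\lambda^*=\frac{\langle \delta_b,\mu_a\rangle}{\|\mu_a\|_2^2+d\sigma_a^2},
\]
and rewrite its numerator and denominator in terms of $\cos\theta$, $\beta$ and $\rho$. This is exact algebra, valid whenever $\mu_a\neq 0$ and $\mu_b\neq 0$, so that the ratios are defined.

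For the numerator, the definition of $\cos\theta$ gives $\langle\delta_b,\mu_a\rangle=\cos\theta\,\|\delta_b\|_2\|\mu_a\|_2$. Using $\|\delta_b\|_2=\beta\|\mu_b\|_2$, this becomes $\beta\cos\theta\,\|\mu_b\|_2\|\mu_a\|_2$. For the denominator, the definition of $\rho$ gives $\|\mu_a\|_2^2+d\sigma_a^2=(1+\rho)\|\mu_a\|_2^2$. Dividing yields the exact identity
\[
\lambda^*=\frac{\beta\cos\theta}{1+\rho}\cdot\frac{\|\mu_b\|_2}{\|\mu_a\|_2}.
\]
With anisotropic noise one replaces $d\sigma_a^2$ by $\mathrm{tr}(\mathrm{Var}(g_a))$ in the definition of $\rho$, and the same identity holds.

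Hypotheses (i)--(iii) state that $\cos\theta\approx c_\theta$, $\beta\approx\kappa$ and $\rho$ is approximately constant, so the prefactor is approximately the constant $\eta=\kappa c_\theta/(1+\rho)$. This gives $\lambda^*\approx\eta\|\mu_b\|_2/\|\mu_a\|_2$. The approximation can be quantified: the relative error of $\eta$ is at most the sum of the relative deviations of $\cos\theta$ and $\beta$ from $c_\theta$ and $\kappa$, plus that of $1+\rho$, to first order. The bounded-$\beta$ hypothesis keeps this error controlled.

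The remaining step is the plug-in: replace $\|\mu_b\|_2$ and $\|\mu_a\|_2$ by the mini-batch norms $\|g_b\|_2$ and $\|g_a\|_2$, and add $\epsilon$ to the denominator to avoid division by zero. This recovers the AGS rule~\eqref{eq:ags_rule}.

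I expect this substitution to be the main obstacle, since it is heuristic rather than a rigorous equality. By Jensen's inequality, $\mathbb{E}\|g_a\|_2^2=\|\mu_a\|_2^2+\mathrm{tr}(\mathrm{Var}(g_a))=(1+\rho)\|\mu_a\|_2^2$, so the empirical norm overestimates $\|\mu_a\|_2$ by a factor of roughly $\sqrt{1+\rho}$. The analogous bias appears for $g_b$. Under (iii), and a corresponding stability assumption on the noise ratio of $g_b$, these factors are approximately constant and can be absorbed into $\eta$. The conclusion is therefore an approximation up to a rescaling of $\eta$, which matches treating $\eta$ as a tuned hyperparameter in practice. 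I would state this explicitly rather than claim exactness.
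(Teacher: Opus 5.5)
Your proposal is correct and follows essentially the paper's route: the paper likewise rewrites $\lambda^*=\frac{\|\delta_b\|_2}{\|\mu_a\|_2}\cdot\frac{\cos\theta}{1+\rho}$, substitutes $\|\delta_b\|_2\approx\kappa\|\mu_b\|_2$, and then simply replaces population norms by mini-batch norms. Your extra remark on plug-in bias is a sound refinement the paper does not make: since $\mathbb{E}\|g_a\|_2^2=(1+\rho)\|\mu_a\|_2^2$, the constant that ends up in the AGS rule differs from $\kappa c_\theta/(1+\rho)$. Note, though, that this identity is the bias--variance decomposition rather than Jensen, and the factor $\sqrt{1+\rho}$ holds only approximately, relying on concentration of $\|g_a\|_2$; Jensen alone gives only $\|\mu_a\|_2\le\mathbb{E}\|g_a\|_2\le\sqrt{1+\rho}\,\|\mu_a\|_2$.
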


% norm-aware used in AGS
The proof is detailed in Appendix~\ref{proof:optimal}. We now derive the practical expression $\lambda_{\mathrm{AGS}}$ for the optimal scaling factor $\lambda^*$, which is norm-based and adaptive, adopted in our AGS module (Sec.~\ref{subsec:scaling}). This analysis establishes a principled gradient scaling factor that improves the resulting gradient update and alleviates the gradient mismatch. 
% 通过理论推导，获得比较优的梯度系数，最后得到一个更优的梯度。

\begin{table}[t]
% \small
\centering
\caption{Performance comparison with the state-of-the-arts on CIFAR-10. W/A denotes the bit length of the weights and activations. }
\label{tab:cifar10}
\begin{tabular}{llcc}
\hline
\textbf{Network} & \textbf{Method} & \textbf{W/A} & \textbf{Top-1} \\ 
\hline
\multirow{6}{*}{ResNet-18} 
& Real-Valued & 32/32 & 94.8\% \\
\cline{2-4}
& RAD & 1/1 & 90.5\% \\
& IR-Net & 1/1 & 91.5\% \\
& RBNN & 1/1 & 92.2\% \\
& ReCU & 1/1 & 92.8\% \\
& \textbf{SURGE (Ours)} & 1/1 & \textbf{93.1\%} \\
\hline

\multirow{7}{*}{ResNet-20} 
& Real-Valued & 32/32 & 92.1\% \\
\cline{2-4}
& DoReFa & 1/1 & 79.3\% \\
& DSQ & 1/1 & 84.1\% \\
& SLB & 1/1 & 85.5\% \\
& IR-Net & 1/1 & 86.5\% \\
& ReCU & 1/1 & 87.4\% \\
& \textbf{SURGE (Ours)} & 1/1 & \textbf{88.0\%} \\
\hline

\multirow{9}{*}{VGG-Small} 
& Real-Valued & 32/32 & 94.1\% \\
\cline{2-4}
& XNOR-Net & 1/1 & 89.8\% \\
& DoReFa & 1/1 & 90.2\% \\
& IR-Net & 1/1 & 90.4\% \\
& RBNN & 1/1 & 91.3\% \\
& DSQ & 1/1 & 91.7\% \\
& SLB & 1/1 & 92.0\% \\
& ReCU & 1/1 & 92.2\% \\
& \textbf{SURGE (Ours)} & 1/1 & \textbf{92.5\%} \\
\hline
\end{tabular}
\end{table}

\section{Experiments}
\label{sec:experiments}
\subsection{Datasets and Implementation Details}
\label{sec:implement}
\textbf{Datasets. }
We evaluate on two standard image classification benchmarks, one object detection benchmark, and one suite of language understanding tasks to demonstrate the effectiveness: CIFAR-10 \citep{krizhevsky2009learning_cifar}, ImageNet-1K \citep{russakovsky2015imagenet}, PASCAL VOC \citep{everingham2010pascalvoc}, and GLUE \citep{wang2018glue}. More details of datasets, data augmentation, and evaluating metrics are provided in Appendix~\ref{appendix:dataset}. 

\textbf{Implementation Details. }On CIFAR-10, we evaluate our method with ResNet-18/20 \citep{he2016deep_resnet} and VGG-Small \citep{simonyan2014very_vgg}. On PASCAL VOC, we binarize Faster-RCNN \citep{ren2016faster} with a ResNet-18 backbone (with minor structural modifications shared by FP/BNN). On GLUE, we evaluate our method with BERT-base \citep{devlin2019bert}. More training details are provided in Appendix~\ref{appendix:implementation_details}. 

\subsection{Image Classification}
\label{sec:classification}
\textbf{CIFAR-10. }We first show the experimental results on CIFAR-10 with ResNet-18, ResNet-20, VGG-Small backbone in Table~\ref{tab:cifar10}. Specifically, we compare SURGE with state-of-the-art methods include RAD~\citep{ding2019regularizing_rad}, IR-Net~\citep{qin2020forward_irnet_bnn}, RBNN~\citep{lin2020rotated_rbnn}, ReCU~\citep{xu2021recu_bnn}, DoReFa~\citep{zhou2016dorefa}, DSQ~\citep{gong2019differentiable_approx}, SLB~\citep{yang2020searching_slb}, IR-Net~\citep{qin2020forward_irnet_bnn}, and XNOR-Net \citep{rastegari2016xnor_bnn}. 
We can see that SURGE outperforms all other methods in all backbones. Compared to recent ReCU, SURGE obtains a 0.3\% performance increase with ResNet-18, a 0.6\% performance increase with ResNet-20, and a 0.3\% performance increase with VGG-Small.

\begin{table*}[!t]
% \small
\centering
\caption{A performance comparison with SOTAs on ImageNet with one-stage training. W/A denotes the bit length of weights and activations. We report the Top-1 (\%) and Top-5 (\%) accuracy performances. }
\label{tab:imagenet_single_stage}
\begin{tabular}{l|c|c|c|c|c}
\hline
Network & Method & W/A & OPs ($\times 10^8$) & Top-1 & Top-5 \\
\hline
\multirow{12}{*}{ResNet-18} 
& Real-valued & 32/32 & 18.19 & 69.6 & 89.2 \\
\cline{2 - 6}
& DoReFa & 1/4 & 2.44 & 59.2 & 81.5 \\
\cline{2 - 6}
& TBN & 1/2 & 1.81 & 55.6 & 79.0 \\
\cline{2 - 6}
& BNN & \multirow{9}{*}{1/1} & \multirow{9}{*}{1.63} & 42.2 & 67.1 \\
& XNOR-Net  & & & 51.2 & 73.2 \\
& Bi-Real Net  & & & 56.4 & 79.5 \\
& IR-Net  & & & 58.1 & 80.0 \\
& BONN  & & & 59.3 & 81.6 \\
& RBNN  & & & 59.6 & 81.6 \\
& ReCU  & & & 61.0 & 82.6 \\
& RBONN  & & & 61.4 & 83.5 \\
& \textbf{SURGE (Ours)} & & & \textbf{62.0} & \textbf{83.7} \\
\hline
\end{tabular}
\end{table*}

\begin{table*}[!t]
% \small
\centering
\caption{A performance comparison with SOTAs on ImageNet with two-stage training. W/A denotes the bit length of weights and activations. We report the Top-1 (\%) and Top-5 (\%) accuracy performances. * denotes the result is from the official checkpoint.}
\label{tab:imagenet_two_stage}
\begin{tabular}{l|c|c|c|c|c}
\hline
Network & Method & W/A & OPs ($\times 10^8$) & Top-1 & Top-5 \\
\hline
\multirow{5}{*}{ResNet-18} 
& Real-valued & 32/32 & 18.19 & 69.6 & 89.2 \\
\cline{2 - 6}
& ReActNet  & \multirow{4}{*}{1/1} & \multirow{4}{*}{1.63} & 65.9 & - \\
& ReCU  & & & 66.4 & 86.5 \\
& RBONN*  & & & 66.5 & \textbf{86.7} \\
& \textbf{SURGE (Ours)} & & & \textbf{66.7} & \textbf{86.7} \\
\hline
\end{tabular}
\end{table*}

\textbf{One-Stage Training on ImageNet. }Table~\ref{tab:imagenet_single_stage} displays the performance comparison in binarizing ResNet-18 with one-stage training on ImageNet. We compare SURGE with DoReFa~\citep{zhou2016dorefa}, TBN~\citep{wan2018tbn}, BNN~\citep{courbariaux2016binarized_bnn}, XNOR-Net \citep{rastegari2016xnor_bnn}, Bi-Real Net~\citep{liu2018bireal_bnn}, IR-Net~\citep{qin2020forward_irnet_bnn}, BONN~\citep{zhao2022towards_BONN}, RBNN~\citep{lin2020rotated_rbnn}, RBONN~\citep{xu2022recurrent_rbonn}. We can see that SURGE is leading in both the top-1 and top-5 accuracies. Specifically, SURGE outperforms RBONN by 0.6\% in top-1 accuracy, achieving the best performance. 

\textbf{Two-Stage Training on ImageNet. }Table~\ref{tab:imagenet_two_stage} displays the performance comparison in binarizing ResNet-18 with two-stage training on ImageNet. We compare SURGE with ReActNet~\citep{liu2020reactnet}, ReCU~\citep{xu2021recu_bnn}, and RBONN~\citep{xu2022recurrent_rbonn}. Results show that SURGE outperforms all other methods in top-1 accuracy. Specifically, SURGE obtains a 0.2\% performance increase in top-1 over RBONN. Our SURGE demonstrates superior overall performance compared to all existing approaches. 

\begin{table*}[!h]
% \small
\centering
\caption{Performance comparison of different methods in Faster-RCNN framework with input resolution set to 1000$\times$600. $^\dagger$ denotes that the result is from our re-implementation.}
\label{tab:faster_rcnn_perf}
\begin{tabular}{l|c|c|c|c|c|c}
\hline
Framework & Backbone & Method & W/A & \makecell{Memory Usage \\ (MB)} & \makecell{OPs \\ ($\times 10^9$)} & mAP \\
\hline
\multirow{6}{*}{Faster-RCNN} 
& \multirow{6}{*}{ResNet-18} 
& Real-valued & 32/32 & 112.88 & 96.40 & 78.8 \\
\cline{3-7}
& & DoReFa-Net  & 4/4 & 21.59 & 27.15 & 73.3 \\
\cline{3-7}
& & ReActNet  & \multirow{4}{*}{1/1} & \multirow{4}{*}{16.61}  & \multirow{4}{*}{18.49} & 69.6 \\
& & LWS-Det  &  &  &  & 73.2 \\
& & IDa-Det$^\dagger$  &  &  &  & 76.5 \\
& & \textbf{SURGE (Ours)} &  &  &  & \textbf{77.0} \\
\hline
\end{tabular}
\end{table*}

\subsection{Object Detection}
On the PASCAL VOC dataset, we compare the proposed SURGE against existing state-of-the-art binarized detection methods, such as ReActNet~\citep{liu2020reactnet}, LWS-Det~\citep{xu2021layer_lws}, and IDa-Det~\citep{xu2022ida_det}, on the Faster-RCNN framework for object detection. The detection result of multi-bit quantized networks DoReFa-Net~\citep{zhou2016dorefa} is also reported. As shown in Table~\ref{tab:faster_rcnn_perf}, compared with the prior state-of-the-art IDa-Det, our method gains 0.5\% performance increase, with the same FLOPs and memory usage. Compared with the raw real-valued detectors, SURGE surpasses raw real-valued Faster-RCNN with ResNet-18 backbone (77.0\% $v.s.$ 76.4\%) by apparent computation acceleration and storage savings by $5.21\times$ and $6.80\times$.

\subsection{Language Understanding}
\label{sec:language_understanding}
On the GLUE dataset, we compare SURGE against existing state-of-the-art methods, such as BinaryBERT~\citep{bai2020binarybert}, BiBERT~\citep{qinbibert}, and BiT~\citep{liu2022bit}, on BERT. 
We can see that SURGE outperforms all other methods. Specifically, SURGE obtains a 1.4\% performance increase compared to BiT, and outperforms BiBERT by 8.9\%, achieving the best performance. 

\begin{table*}[h]
\centering
\small
\caption{Performance comparison of BERT quantization on the GLUE dev set. FP is short for full precision. $^\dagger$ denotes our re-implementation without multi-distillation techniques for fair comparison.}
\label{tab:bert}
\begin{tabular}{lccccccccccc}
\hline
Quant  & Size (MB) & FLOPs (G) & MNLI$_{m/mm}$ & QQP & QNLI & SST-2 & CoLA & STS-B & MRPC & RTE & Avg. \\
\hline
BERT (FP)          & 418  & 22.5 & 84.9/85.5 & 91.4 & 92.1 & 93.2 & 59.7 & 90.1 & 86.3 & 72.2 & 83.9 \\
\hline
BinaryBERT    & 16.5 & 0.4  & 35.6/35.3 & 66.2 & 51.5 & 53.2 & 0 & 6.1 & 68.3 & 52.7 & 41.0 \\
BiBERT        & 13.4 & 0.4  & 66.1/67.5 & 84.8 & 72.6 & 88.7 & 25.4  & 33.6  & 72.5 & 57.4 & 63.2 \\
BiT$^\dagger$    & 13.4 & 0.4  & 77.0/77.5 & 85.4 & 85.5 & 87.8 & 23.6 & 68.0 & 79.4 & 58.1 & 70.6 \\
\textbf{SURGE(Ours)}  & 13.4 & 0.4  & 77.3/77.5 & 87.1 & 86.2 & 88.6 & 24.1 & 71.7 & 80.6 & 60.6 & \textbf{72.0} \\
\hline
\end{tabular}
\end{table*}

\subsection{Ablation Study}
\textbf{Ablation on Components.} We ablate each component on CIFAR-10 using ResNet20. As shown in Table~\ref{tab:ablation_left}, the baseline achieves 87.4\% accuracy. Introducing the \textbf{Dual-Path Gradient Compensator (DPGC)} alone improves performance by +0.4\%, validating its capability to balance gradient conflicts. Subsequent integration of the \textbf{Adaptive Gradient Scaler (AGS)} adds another +0.2\%, demonstrating that AGS effectively modulates gradient magnitudes without disrupting DPGC's compensation. The hierarchical gains confirm that both mechanisms address distinct aspects of gradient optimization.

\begin{table*}[!h]
\small
\centering
\caption{Ablation Study on CIFAR-10 with ResNet20.}
\label{tab:ablation}
\hspace*{0.3cm}
\begin{subtable}{0.45\textwidth}
  \centering
  \caption{Ablation on components}
  \begin{tabular}{lc}
    \toprule
    \textbf{Method} & \textbf{Accuracy (\%)} \\
    \midrule
    Baseline & 87.4 \\
    + DPGC & 87.8 \\
    + DPGC + AGS & \textbf{88.0} \\
    \bottomrule
  \end{tabular}
  \label{tab:ablation_left}
\end{subtable}
\hfill
\begin{subtable}{0.52\textwidth}
  \centering
  \caption{Ablation on gradient compensation scope}
  \begin{tabular}{lcc}
    \toprule
    \textbf{Method} &\textbf{Scope}& \textbf{Accuracy (\%)} \\
    \midrule
    Baseline & / & 87.4 \\
    \hline
    \multirow{3}{*}{SURGE} & clipped gradients & 87.7 \\
     & unclipped gradients & 87.6 \\
     & all gradients & \textbf{88.0} \\
    \bottomrule
  \end{tabular}
  \label{tab:ablation_right}
\end{subtable}
\end{table*}

\begin{figure*}[!h]
    \centering
    \includegraphics[width=.6\linewidth]{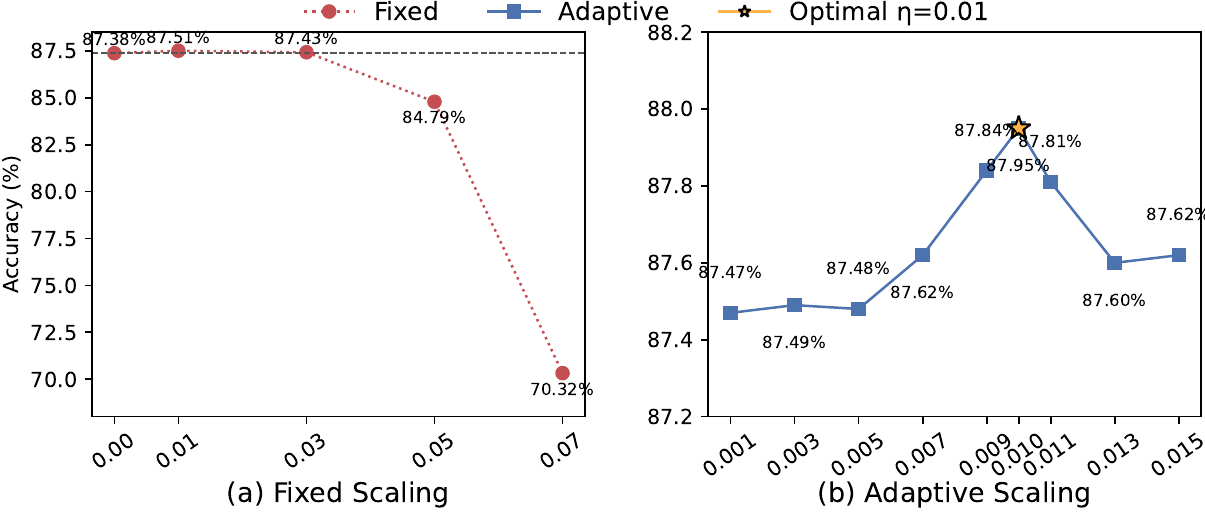}
    \caption{Ablation study on parameter scaling strategies. (a) is fixed scaling with constant factors across training iterations. (b) is adaptive scaling via parameter $\eta$ that dynamically adjusts the compensation strength (Eq.~\ref{eq:ags_rule}).}
    \label{fig:ablation_param}
\end{figure*}

\textbf{Ablation on Parameter $\eta$. }
As shown in Figure~\ref{fig:ablation_param}, the performance degradation of fixed scaling ($scale>0.05$, max -17.7\%) highlights the necessity of dynamic adaptation, while our adaptive scaling achieves peak accuracy (87.95\%) at $\eta=0.01$. The result confirms that our theory-driven design (Theorem~\ref{thm:optimal-lambda}) successfully balances gradient compensation and training stability.

\textbf{Ablation on Gradient Compensation Scope of DPGC. }
% We ablate gradient compensation scope on CIFAR-10 using ResNet20. As shown in Table~\ref{tab:ablation_right}, the baseline achieves 87.4\% accuracy. Compensating gradients of activation that are clipped by STE improves performance by +0.3\%, and +0.2\% by compensating unclipped ones, demonstrating that SURGE eliminates fixed-range gradient clipping constraints and enable better parameter updates. Compensating all gradients of activation improves performance by +0.6\%. 
We ablate the gradient compensation scope on CIFAR-10 using ResNet20. As detailed in Table~\ref{tab:ablation_right}, compensating \textit{only} gradients outside STE's clipping range ($|x|>1$) yields 87.7\% accuracy (0.3\% improvement over baseline), while compensating \textit{solely} within-range gradients ($|x|\leq1$) achieves 87.6\% (0.2\% improvement). This verifies that both clipped and preserved gradient components contribute to parameter optimization. When jointly compensating \textit{all} activation gradients through SURGE's adaptive integration, accuracy rises to 88.0\% (0.6\% improvement). This confirms that SURGE's design overcomes fixed-range clipping limitations in STE, enabling comprehensive gradient utilization.

\section{Conclusion}
This paper proposes a novel gradient compensation strategy that mitigates the STE-induced gradient mismatch through an auxiliary backpropagation. The proposed Dual-Path Gradient Compensator (DPGC) utilizes a dual-path architecture that ensures identical output to standard BNNs while providing less biased gradients for compensation. And the Adaptive Gradient Scaler (AGS) dynamically balances inter-branch gradient contributions via norm-based scaling. SURGE obtains the best performance over existing methods through main benchmarks in image classification, object detection, and language understanding tasks. 

% % Acknowledgements should only appear in the accepted version.
\section*{Acknowledgements}
The work was supported by the National Key Research and Development Program of China (No.2023YFC3306401) and Beijing Natural Science Foundation L244043. This research was also supported by National Natural Science Foundation of China 623B2016 and 62576018, and Zhejiang Provincial Natural Science Foundation of China under Grant No. LD24F020007.
% \textbf{Do not} include acknowledgements in the initial version of the paper
% submitted for blind review.

% If a paper is accepted, the final camera-ready version can (and usually should)
% include acknowledgements.  Such acknowledgements should be placed at the end of
% the section, in an unnumbered section that does not count towards the paper
% page limit. Typically, this will include thanks to reviewers who gave useful
% comments, to colleagues who contributed to the ideas, and to funding agencies
% and corporate sponsors that provided financial support.

% \clearpage
\section*{Impact Statement}
This paper proposes a training-time gradient compensation framework for Binary Neural Networks (BNNs) to improve optimization stability and accuracy under extreme quantization. The primary expected impact is enabling more efficient deployment of neural models on resource-constrained devices through reduced memory footprint and computation, which may lower energy consumption and operational cost for practical applications.

We do not anticipate direct safety or security risks introduced by the proposed method beyond those already associated with deploying machine learning models in real-world settings. The method does not introduce new data collection, does not require sensitive information, and does not change the functional scope of the underlying models; it mainly affects the training dynamics and can be removed at inference time. As with other model compression techniques, improved efficiency could facilitate wider deployment, and responsible use should follow standard best practices for dataset governance, evaluation, and monitoring in downstream applications.

% In the unusual situation where you want a paper to appear in the
% references without citing it in the main text, use \nocite
\bibliography{example_paper}
\bibliographystyle{icml2026}

%%%%%%%%%%%%%%%%%%%%%%%%%%%%%%%%%%%%%%%%%%%%%%%%%%%%%%%%%%%%%%%%%%%%%%%%%%%%%%%
%%%%%%%%%%%%%%%%%%%%%%%%%%%%%%%%%%%%%%%%%%%%%%%%%%%%%%%%%%%%%%%%%%%%%%%%%%%%%%%
% APPENDIX
%%%%%%%%%%%%%%%%%%%%%%%%%%%%%%%%%%%%%%%%%%%%%%%%%%%%%%%%%%%%%%%%%%%%%%%%%%%%%%%
%%%%%%%%%%%%%%%%%%%%%%%%%%%%%%%%%%%%%%%%%%%%%%%%%%%%%%%%%%%%%%%%%%%%%%%%%%%%%%%
\clearpage
\renewcommand\thefigure{\Alph{figure}} 
\setcounter{figure}{0}
\renewcommand\thetable{\Alph{table}} 
\setcounter{table}{0}
\renewcommand{\thealgorithm}{\Alph{algorithm}}
\setcounter{equation}{0}
\renewcommand{\theequation}{\thesection.\arabic{equation}}
\appendix
\onecolumn
{\Large \textbf{APPENDIX}}
\section{Training Procedure}
The complete training procedure is summarized in Algorithm~\ref{alg:layer_training}. 
\label{appendix:training_procedure}
\begin{algorithm}[h]
% \small
\resizebox{0.95\linewidth}{!}{ 
\begin{minipage}{\linewidth}
\caption{Layer-wise Training with DPGC \& AGS}
\label{alg:layer_training}
\begin{algorithmic}[1]
\REQUIRE Layer input $x^{(l)}$, target $y$, learning rate $\alpha$, base scaling coefficient $\eta$, loss function $\mathcal{F}$
\ENSURE Trained binarized weights $\{W_b^{(l)}\}_{l=1}^L$
\STATE Initialize layer parameters $W_b^{(l)}, W_a^{(l)}$ \COMMENT{Binary \& auxiliary full-precision paths}
\STATE Initialize $\lambda^{(l,0)} \gets \frac{1}{\sqrt{|W_a^{(l)}|}}$ \COMMENT{Reciprocal sqrt of auxiliary weight cardinality $|W_a^{(l)}|$}

\FOR{iteration $t=1$ to $T$}
    \STATE \textbf{Forward Propagation (Layer $l$):}
    \STATE Compute binary path: $f_b^{(l)} \gets W_b^{(l)} \odot \mathrm{Sign}(x^{(l)})$
    \STATE Compute auxiliary path: $f_a^{(l)} \gets W_a^{(l)} \odot x^{(l)}$
    \STATE Generate compensator: $f_{ao}^{(l)} \gets \lambda^{(l,t-1)} \odot f_a^{(l)}$ \COMMENT{Previous scaling factor}
    \STATE Synthesis output: $\mathrm{out}^{(l)} \gets f_b^{(l)} - f_{ao}^{(l)}\downarrow + f_{ao}^{(l)}$ \COMMENT{Forward synthesis via gradient-decoupled decomposition, detach gradient at $\downarrow$}
    
    \STATE \textbf{Loss Computation:}
    \STATE Calculate Loss $\mathcal{L}$
    
    \STATE \textbf{Backward Propagation (Layer $l$):}
    \STATE Compute main branch gradients:
    \STATE $\quad g_b \gets \frac{\partial\mathcal{L}}{\partial f_b}\frac{\partial f_b}{\partial x}\Big|_{\text{STE}}$, $\quad g_{wb}^{(l)} \gets \frac{\partial\mathcal{L}}{\partial W_b^{(l)}}\Big|_{\mathrm{STE}}$ 
    \STATE Compute auxiliary branch gradients:
    \STATE $\quad g_a \gets \frac{\partial\mathcal{L}}{\partial f_{a}}\frac{\partial f_{a}}{\partial x}$, $\quad g_{wa}^{(l)} \gets \lambda^{(l,t-1)} \odot \frac{\partial\mathcal{L}}{\partial W_a^{(l)}}$ 
    \STATE Then we have: $\frac{\partial\mathcal{L}}{\partial x^{(l)}}=g_b+\lambda g_a$  
    
    \STATE \textbf{Adaptive Gradient Scaler:}
    \STATE Calculate norm ratio: $r \gets {\|g_b\|_2}/{(\|g_a\|_2 + \epsilon)}$
    \STATE Update scaling factor: $\lambda^{(l,t)} \gets \eta \cdot r$
    
    \STATE \textbf{Parameter Update (Layer $l$):}
    \STATE $W_b^{(l)} \gets W_b^{(l)} - \alpha \cdot g_{wb}^{(l)}$
    \STATE $W_a^{(l)} \gets W_a^{(l)} - \alpha \cdot g_{wa}^{(l)}$
\ENDFOR
\end{algorithmic}
\end{minipage}
}
\end{algorithm}

\section{Theoretical Foundations and Proofs}
\subsection{Assumptions Underlying the Moment Model and Theorem~\ref{thm:optimal-lambda}}
\label{assumptions}
In this subsection we make explicit the assumptions underlying
Definition~\ref{def:grad_stats} and Theorem~\ref{thm:optimal-lambda}.
Intuitively, since the derivative of \texttt{sign} is zero almost everywhere and
corresponds to a Dirac delta distribution at the origin in the sense of distributions,
it is natural to view $g^*$ as the gradient induced by an ``ideal'' surrogate that
captures this behavior, while practical rules such as STE provide tractable but biased
approximations.
\begin{assumption}[Ideal reference gradient from a surrogate family]
\label{ass:ideal-gradient}
Fix a binarization node whose pre-binarization activation is denoted by $x\in\mathbb{R}^d$.
Consider a family $\mathcal{S}$ of smooth surrogate functions $s:\mathbb{R}\to\mathbb{R}$
that approximate the non-differentiable \texttt{sign} function used in binarization.
For each $s\in\mathcal{S}$, let $\ell_s(\cdot;W)$ denote the population loss of the
corresponding surrogate network, and define the population risk as a function of this node input:
\[
\mathcal{L}_s(x;W) := \mathbb{E}_{\xi}\big[\,\ell_s(\xi;W)\,\big]\quad \text{with the backpropagated gradient taken w.r.t. }x.
\]
We assume that there exists a surrogate
$s^* \in \mathcal{S}$ that attains the smallest population loss within this family, and
define the associated reference (``better'') gradient at the current parameter $W$ as
\[
    g^* := \nabla_x \mathcal{L}_{s^*}(x;W).
\]
This $g^*$ is not observable in practice and we never require a closed-form expression
for it; it serves as an ideal target that practical surrogate gradients aim to approximate.
We assume that $g^*$ has finite second moments.
\end{assumption}
\begin{assumption}[Empirical gradients as random vectors]
\label{ass:random-gradients}
At a fixed parameter $W$, the empirical gradients $g_b, g_a \in \mathbb{R}^d$
obtained from a single mini-batch are modelled as random vectors whose randomness
comes from mini-batch sampling, data noise, and the stochastic optimization procedure.
All expectations $\mathbb{E}[\cdot]$ and variances $\mathrm{Var}(\cdot)$ in our analysis
are taken with respect to this randomness, and the empirical gradients have finite
second moments.
\end{assumption}

% \begin{assumption}[Directional Consistency under Magnitude Distortion]
% We acknowledge that the STE gradient $g_b$ suffers from severe magnitude distortion due to the clipping mechanism (where gradients outside $[-1, 1]$ are zeroed). However, we assume that $g_b$ remains statistically correlated with the descent direction of the ideal reference gradient $g^*$. Formally, the cosine similarity satisfies $\mathbb{E}[\cos(g_b, g^*)] \ge c > 0$ for some constant $c$. Under this view, the approximation error $\delta_b$ is dominated by the scale mismatch rather than purely random noise, necessitating the magnitude-aware compensation provided by our auxiliary branch.
% \end{assumption}

\begin{assumption}[Directional consistency and bounded relative bias]
\label{ass:dir_consistency}
We acknowledge that STE-based gradients may suffer magnitude distortion due to clipping.
We assume that the baseline surrogate gradient $g_b$ remains statistically correlated with the descent direction of the ideal reference gradient $g^*$, i.e.,
$\mathbb{E}[\cos(g_b, g^*)] \ge c > 0$ during the main training phase.
Moreover, we assume the \emph{relative bias ratio}
$\beta:=\|\delta_b\|_2/\|\mu_b\|_2$ is bounded and varies slowly after a short transient (layer-wise),
so it can be treated as approximately constant when deriving practical scaling rules.
\end{assumption}

\begin{assumption}[Isotropic, homoscedastic gradient noise]
\label{ass:isotropic-noise}
We decompose the empirical gradients as
\[
    g_b = \mathbb{E}[g_b] + \varepsilon_b, \qquad
    g_a = \mathbb{E}[g_a] + \varepsilon_a,
\]
where the noise terms satisfy $\mathbb{E}[\varepsilon_b] = \mathbb{E}[\varepsilon_a] = 0$.
We assume an isotropic, homoscedastic noise model:
there exist scalars $\sigma_b^2,\sigma_a^2 \ge 0$ such that
\[
    \mathrm{Var}(g_b) = \mathbb{E}[\varepsilon_b \varepsilon_b^\top] = \sigma_b^2 I_d,
    \qquad
    \mathrm{Var}(g_a) = \mathbb{E}[\varepsilon_a \varepsilon_a^\top] = \sigma_a^2 I_d,
\]
where $I_d$ is the $d$-dimensional identity matrix.
\end{assumption}

\subsection{Proof of Theorem~\ref{thm:optimal-lambda}}
\label{proof:optimal}
\begin{proof}
Expand the error expectation:
\begin{align}
\mathbb{E}\!\left[\|\tilde g-g^*\|_2^2\right]
&=\mathbb{E}\!\left[\|g_b+\lambda g_a-g^*\|_2^2\right] \nonumber\\
&=\|\mathbb{E}[g_b]+\lambda\mathbb{E}[g_a]-g^*\|_2^2
+\mathrm{tr}(\mathrm{Var}(g_b))+\lambda^2\mathrm{tr}(\mathrm{Var}(g_a)) \nonumber\\
&\quad +2\lambda\,\mathbb{E}\!\left[(g_b-\mathbb{E}[g_b])^\top (g_a-\mathbb{E}[g_a])\right]. 
\label{eq:bias_var_decomp}
\end{align}
By the dot-product uncorrelated assumption in Theorem~\ref{thm:optimal-lambda}, the last term in
\eqref{eq:bias_var_decomp} vanishes. 
Using $\mathbb{E}[g_b]=g^*-\delta_b$, we obtain
\begin{align}
\mathbb{E}\!\left[\|\tilde g-g^*\|_2^2\right]
&=\|\lambda\mathbb{E}[g_a]-\delta_b\|_2^2
+\mathrm{tr}(\mathrm{Var}(g_b))
+\lambda^2\mathrm{tr}(\mathrm{Var}(g_a)).
\end{align}
Differentiating w.r.t.\ $\lambda$ gives
\begin{align}
\nabla_\lambda\,\mathbb{E}\!\left[\|\tilde g-g^*\|_2^2\right]
= -2\langle\delta_b,\mu_a\rangle
+2\lambda\left(\|\mathbb{E}[g_a]\|_2^2+\mathrm{tr}(\mathrm{Var}(g_a))\right),
\end{align}
which yields the minimizer
\begin{align}
\label{eq:analytical_form}
\lambda^*=
\frac{\langle\delta_b,\mathbb{E}[g_a]\rangle}{\|\mathbb{E}[g_a]\|_2^2+\mathrm{tr}(\mathrm{Var}(g_a))}.
\end{align}
In particular, under $\mathrm{Var}(g_a)=\sigma_a^2 I_d$, we have $\mathrm{tr}(\mathrm{Var}(g_a))=d\sigma_a^2$.

\noindent\textbf{Practical approximation via dynamic analysis.}
Eq.~\eqref{eq:analytical_form} can be rewritten as
\begin{align}
\lambda^*
=\frac{\|\delta_b\|_2}{\|\mu_a\|_2}\cdot
\frac{\cos\theta}{1+\rho},
\qquad
\cos\theta:=\frac{\langle\delta_b,\mu_a\rangle}{\|\delta_b\|_2\|\mu_a\|_2},
\quad
\rho:=\frac{d\sigma_a^2}{\|\mu_a\|_2^2}.
\label{eq:lambda_factor}
\end{align}
To obtain a computable rule, we parameterize the unobserved bias magnitude via the
\emph{relative bias ratio} $\beta:=\|\delta_b\|_2/\|\mu_b\|_2$ (with $\mu_b=\mathbb{E}[g_b]$)
and treat $\beta\approx\kappa$ during the main training phase.
Plugging $\|\delta_b\|_2\approx \kappa\|\mu_b\|_2$ into \eqref{eq:lambda_factor} yields
\begin{align}
\lambda^* \approx \eta\,\frac{\|\mu_b\|_2}{\|\mu_a\|_2},
\qquad
\eta:=\frac{\kappa\,c_\theta}{1+\rho}.
\end{align}
Finally, replacing population quantities by mini-batch estimates gives
\begin{align}
\lambda_{\mathrm{AGS}}\approx \eta\,\frac{\|g_b\|_2}{\|g_a\|_2+\epsilon},
\end{align}
where $\epsilon>0$ is a numerical stabilizer.
Empirically, the inter-branch cosine similarity stays high over most of training and drops only near convergence
(Figure~\ref{fig:cosine_similarity}), while $\lambda$ quickly reaches a plateau in the toy study
(Figure~\ref{fig:5group_convergence}), supporting the use of a slowly-varying $\eta$ in practice.

\end{proof}

\begin{figure}[t]
    \centering
    \includegraphics[width=0.9\linewidth]{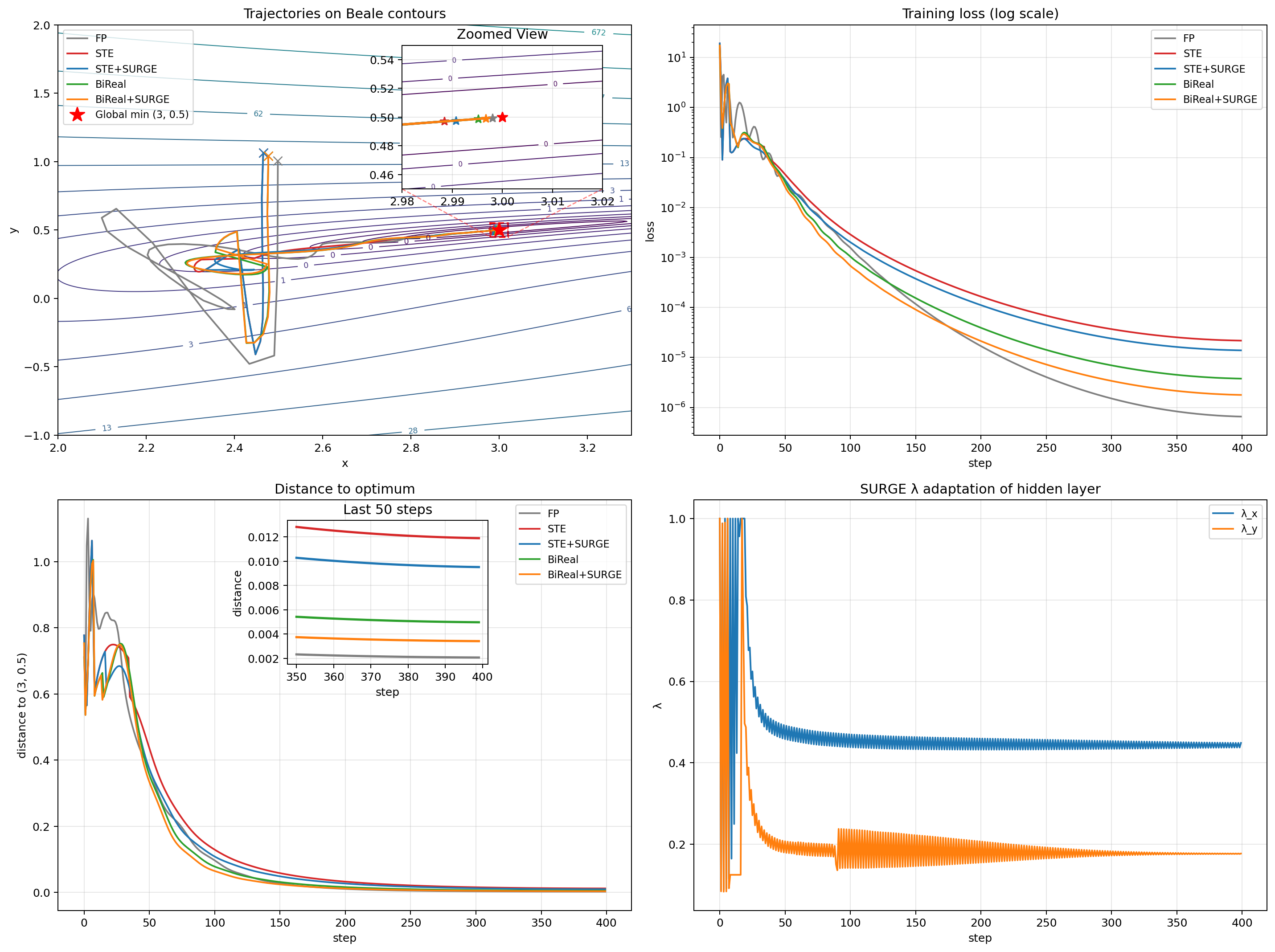}
    \caption{Comparison of five methods (FP, STE, STE+SURGE, Bi-Real, Bi-Real+SURGE) on Beale function: (a) trajectories, (b) loss, (c) distance to optimum, (d) SURGE's $\lambda$ adaptation of hidden layer. }
    \label{fig:5group_convergence}
\end{figure}

\begin{figure}[t]
    \centering
    \includegraphics[width=0.9\linewidth]{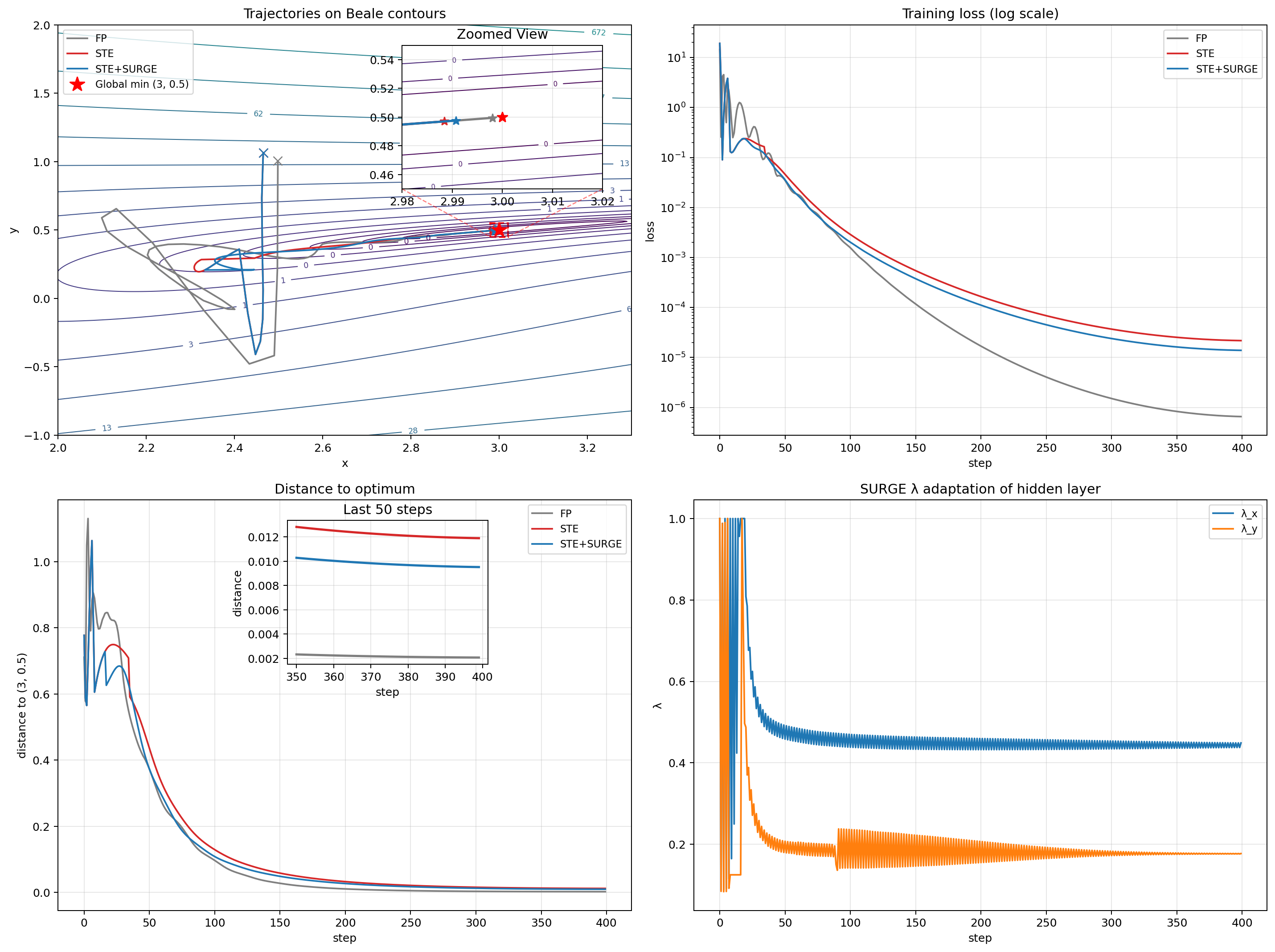}
    \caption{Comparison of three methods (FP, STE, STE+SURGE) on Beale function: (a) trajectories, (b) loss, (c) distance to optimum, (d) SURGE's $\lambda$ adaptation of hidden layer. }
    \label{fig:3group_convergence}
\end{figure}

\section{SURGE for Toy Problem and Visualizations}
\label{appendix:toy}
We implement a simple yet illustrative toy model to optimize the non-convex Beale function. The original model architecture consists of an input layer, a hidden layer with ReLU activation, and an output layer that produces 2D coordinates. In our experiments, we binarize the first and second linear layer.

\textbf{Convergence and loss curve of toy model.} As illustrated in Figure~\ref{fig:5group_convergence}, we compare five training methods under identical initialization: FP (Full-precision network), STE, STE+SURGE, Bi-Real, Bi-Real+SURGE. We also provide a focused three-group subset (FP, STE, STE+SURGE) shown in Figure~\ref{fig:3group_convergence} for clearer visualization. Specifically, for each binarized layer (based on STE / Bi-Real), SURGE adds a parallel full-precision layer and merges their outputs. We provide trajectory plot, loss curve, distance to optimum, adaptive scaling factor ($\lambda$). It can be seen that SURGE achieves better convergence performance, yielding lower loss compared to the control group without SURGE integration. And the scale factor $\lambda$ also reaches convergence.

\textbf{Parameter evolution of toy model.} As illustrated in Figure~\ref{fig:5group_param}, we provide a parameter evolution plot, where we track the Frobenius norms of the binary and auxiliary full-precision weights, as well as the learnable scaling factors $\alpha_w$, $\alpha_a$. We also provide a focused three-group subset (FP, STE, STE+SURGE) shown in Figure~\ref{fig:3group_param} for clearer visualization. As shown in the figure of Frobenius norm of weights, compared to the FP model, all binarized variants lie in a narrow band, but variants with SURGE maintain a slightly larger and more stable weight norm after the initial transient. As shown in the figure of learnable scaling factors, SURGE consistently pushes these scales slightly higher.

\textbf{Weight distribution.} As illustrated in Figure~\ref{fig:weight_distri}, we provide the weight distribution of one layer in ResNet-18 trained with CIFAR-10. It can be seen that weights around zero is less with SURGE than the counterpart without SURGE. The binarization results without SURGE is less robust to any robust disturbance \citep{xu2022recurrent_rbonn}, as sign(w) would more frequently flips.

\textbf{Cosine similarity.} As shown in Figure~\ref{fig:cosine_similarity}, we draw a figure of cosine similarity (on ResNet-20 trained with CIFAR-10) between the gradient of weights of main branch and auxiliary branch, averaged over layers and mini-batches. It can be seen that the cosine similarity is relatively high. During the main training phase, the similarity slowly decreases but remains in the range 0.8-0.9. Towards the very end of training, when the model has already entered a small local basin and gradients become very small, the cosine similarity drops more sharply. This is expected: the full-precision branch can still perform fine-grained adjustments inside the basin, whereas the binary branch is constrained by quantization, so the auxiliary branch likely compensates in different directions.

\textbf{Noise contrast experiment.} As shown in Figure~\ref{fig:4group_convergence}, we conducted experiments with added noise on the toy model to more intuitively demonstrate that our compensation is not merely noise. The results show that the convergence process with added noise becomes significantly more volatile, and the final convergence performance is worse.

\begin{remark}[Phase-wise alignment and interpretation]
\label{rem:phase_alignment}
The ``approximately stable'' alignment in Corollary~\ref{cor:norm-ratio} is intended to hold
over the main optimization stage after warm-up.
As observed in Figure~\ref{fig:cosine_similarity}, the cosine similarity between the two branches
remains high for most iterations, indicating that the auxiliary branch largely reinforces compatible
descent directions. Near convergence, the similarity decreases as the binarized branch becomes more
constrained within a quantization-induced basin, and the auxiliary (full-precision) gradient may
play a more prominent role in directional correction and fine-grained refinement.
We further observe a consistently positive cosine similarity between the gradients of inputs of the
two branches in Figure~\ref{fig:cosine_similarity_input}, suggesting that such compatibility also appears
in the backpropagated layer-wise gradient signals.
Empirically, the adaptive scale $\lambda$ quickly stabilizes after a brief transient (Figure~\ref{fig:5group_convergence}), supporting our "main-phase stability" approximation in Corollary~\ref{cor:norm-ratio}.
\end{remark}

\begin{figure}[t]
    \centering
    \includegraphics[width=0.9\linewidth]{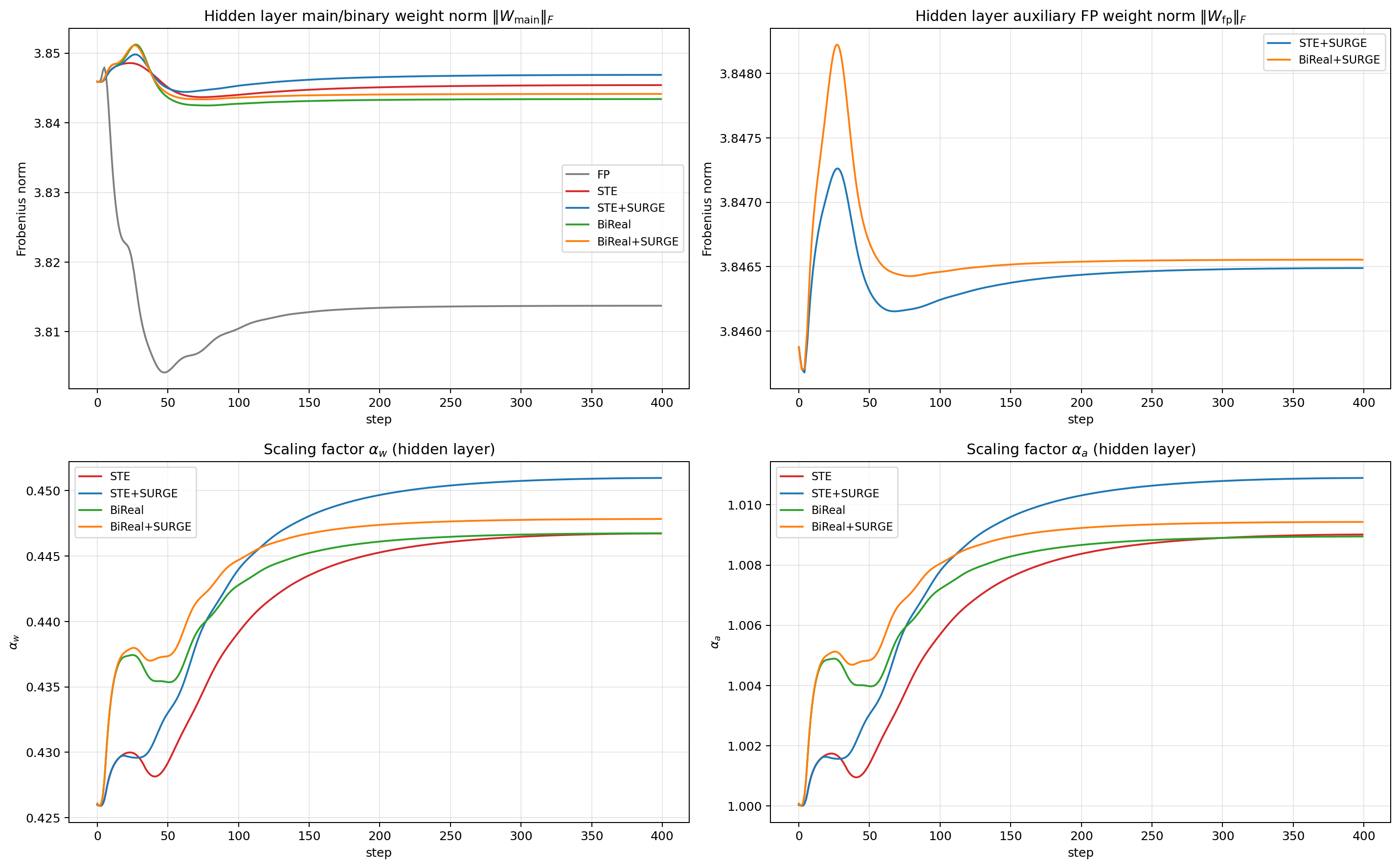}
    \caption{Comparison of five methods (FP, STE, STE+SURGE, Bi-Real, Bi-Real+SURGE) on Beale function: (a) weight norm of main branch, (b) weight norm of auxiliary branch, (c) scaling factor of weights, (d) scaling factor of activations. }
    \label{fig:5group_param}
\end{figure}

\begin{figure}[ht]
    \centering
    \includegraphics[width=0.9\linewidth]{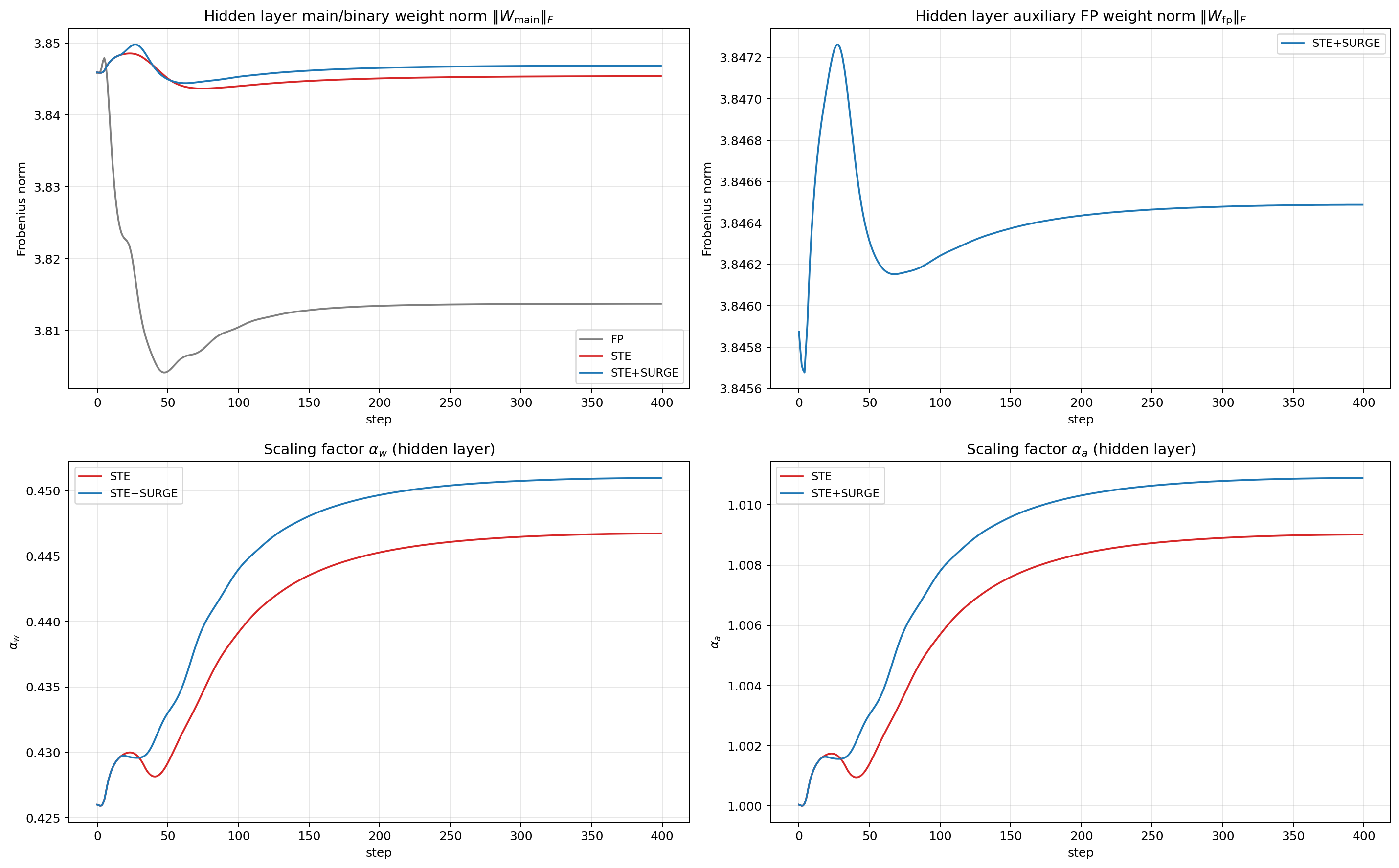}
    \caption{Comparison of three methods (FP, STE, STE+SURGE) on Beale function: (a) weight norm of main branch, (b) weight norm of auxiliary branch, (c) scaling factor of weights, (d) scaling factor of activations. }
    \label{fig:3group_param}
\end{figure}

\begin{figure}[ht]
    \centering
    \includegraphics[width=1.0\linewidth]{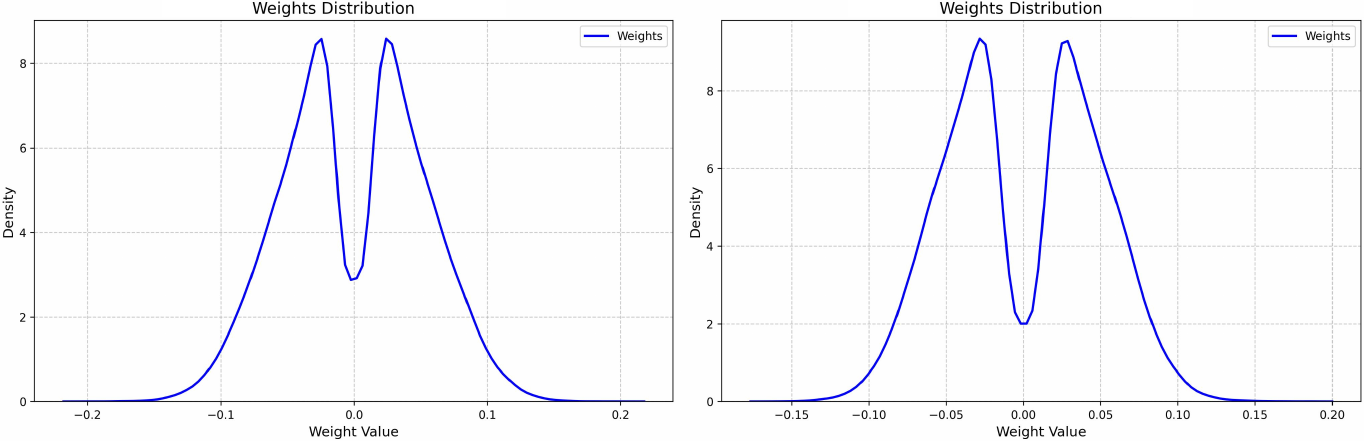}
    \caption{Weight distribution comparison of a ResNet-18 layer trained on CIFAR-10. (left) Baseline method; (right) SURGE (ours). }
    \label{fig:weight_distri}
\end{figure}

\begin{figure}[ht]
    \centering
    \includegraphics[width=.8\linewidth]{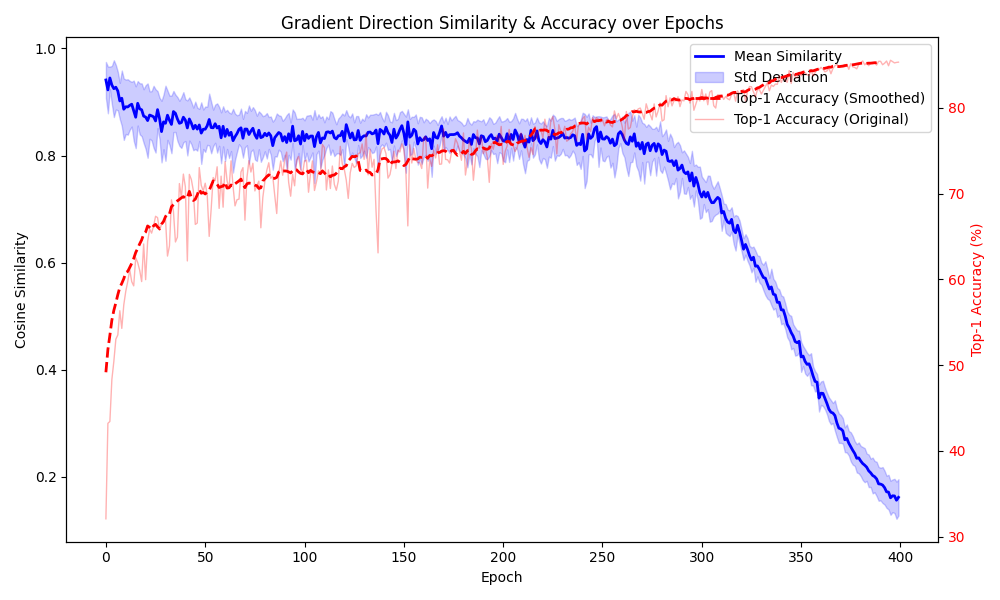}
    \caption{Cosine similarity (on ResNet-20 trained with CIFAR-10) between the gradient of weights of main branch and auxiliary branch, averaged over layers and mini-batches}
    \label{fig:cosine_similarity}
\end{figure}

\begin{figure}[ht]
    \centering
    \includegraphics[width=.9\linewidth]{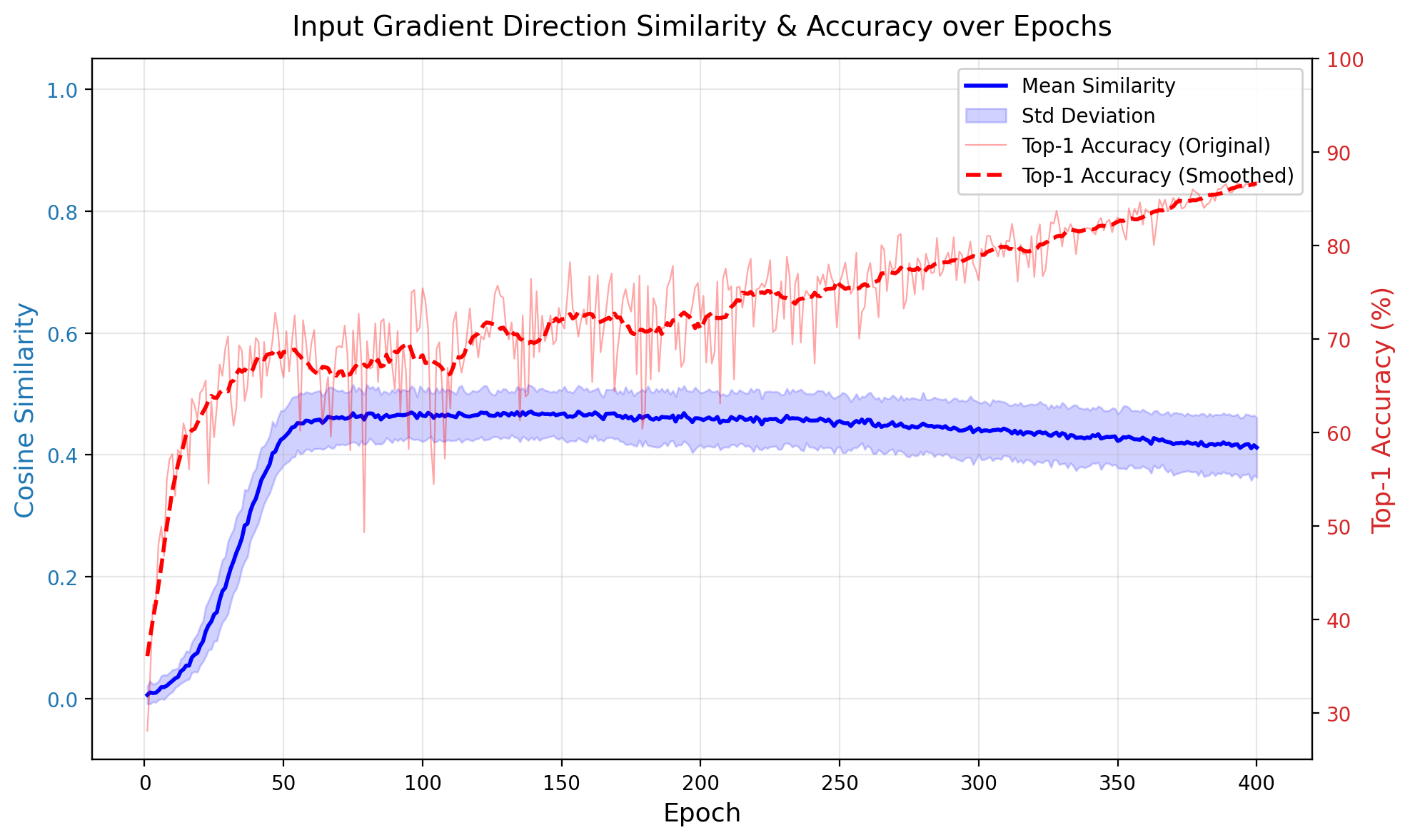}
    \caption{Cosine similarity (on ResNet-20 trained with CIFAR-10) between the gradient of inputs of main branch and auxiliary branch, averaged over layers and mini-batches}
    \label{fig:cosine_similarity_input}
\end{figure}

\begin{figure}[ht]
    \centering
    \includegraphics[width=0.9\linewidth]{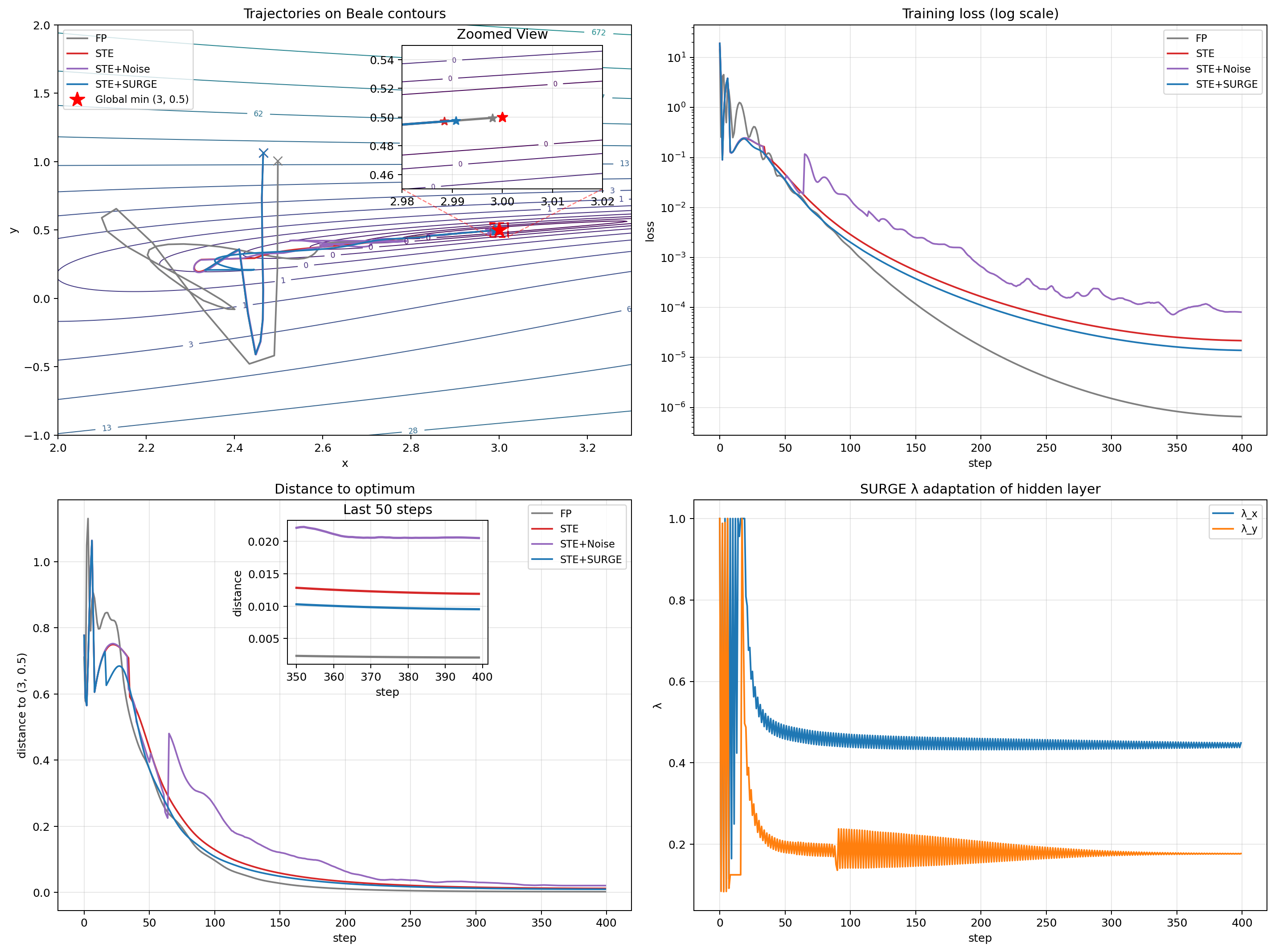}
    \caption{Comparison of five methods (FP, STE, STE+Noise, STE+SURGE) on Beale function: (a) trajectories, (b) loss, (c) distance to optimum, (d) SURGE's $\lambda$ adaptation of hidden layer. }
    \label{fig:4group_convergence}
\end{figure}

\section{Dataset, Data Augmentation, and Evaluating Metrics}
\label{appendix:dataset}
We evaluate on two standard image classification benchmarks, one object detection benchmark, and one suite of language understanding tasks to demonstrate the effectiveness: CIFAR-10 \citep{krizhevsky2009learning_cifar} (10k 32$\times$32 images with random cropping \& flipping), ImageNet-1K \citep{russakovsky2015imagenet} (1.28M training and 50k validation images at 224$\times$224 resolution via center crop), PASCAL VOC \citep{everingham2010pascalvoc} (around 16k training and 5k validation images across 20 classes with multi-scale resizing to 1500×900, 1000×600, and 666×400, random flipping at 0.5 ratio), and GLUE \citep{wang2018glue} (covers CoLA, SST-2, MRPC, STS-B, QQP, MNLI (m/mm), QNLI, and RTE, without data augmentation). 

We evaluate models using Top-1 and Top-5 accuracy for image classification, mean Average Precision at IoU=0.5 (mAP@0.5) for object detection, and task-specific GLUE metrics following the official protocol: Matthews correlation (MCC) for CoLA; accuracy for SST-2, MNLI (matched/mismatched), QNLI, and RTE; F1/Accuracy for MRPC and QQP; and Spearman correlations for STS-B. 

\section{Implementation Details}
\label{appendix:implementation_details}
\subsection{Model Details }
\textbf{CIFAR-10. }On CIFAR-10, we evaluate our method with ResNet-18/20 \citep{he2016deep_resnet} and VGG-Small \citep{simonyan2014very_vgg}. We binarize all convolutional and fully-connected layers except the first and last ones. 

\textbf{ImageNet-1K. }On ImageNet-1K, we binarize ResNet-18 and retain the first layer, shortcut, and last layer in the networks as real-valued following \citep{liu2018bireal_bnn}. We adopt the same model modification scheme as described in \citep{liu2020reactnet}. 

\textbf{PASCAL VOC. }On PASCAL VOC, we binarize Faster-RCNN with a ResNet-18 backbone. We keep the shortcut, first layer, and the last layer (the 1$\times$1 convolution layer of RPN and an FC layer of the bbox head) in the detectors as real-valued after implementing 1-bit CNNs. Following \citep{wang2020bidet}, we modify the network of ResNet-18 with an extra shortcut and PReLU \citep{he2015delving_prelu}. 

\textbf{GLUE. }On GLUE, we evaluate our method with BERT-base \citep{devlin2019bert}. We follow the previous work to binarize the word embedding layer, MHA and FFN in transformer layers, but leave full-precision classifier, position embedding layer, and token type embedding layer \citep{qinbibert, liu2022bit}. 

\subsection{Training Details }
\textbf{CIFAR-10. }On CIFAR-10, we train our models from scratch and following the setting in \citep{xu2021recu_bnn}, and the base scaling coefficient $\eta$ is set to $0.01$. 

\textbf{ImageNet-1K. }On ImageNet-1K, we follow two implementation setups for fair comparison. First, we employ \textbf{one-stage training} on ResNet-18 following the setting in \citep{xu2022recurrent_rbonn}, using Adam as the optimizer and a weight decay of $1e-5$. The initial learning rate is set to $5e-4$. The model is trained from scratch for 200 epochs with learning rates optimized by the annealing cosine learning rate schedule. Second, we employ \textbf{two-stage training} following the setting in \citep{liu2020reactnet}, using Adam as the optimizer. The network is supervised by a real-valued ResNet-34 teacher. In the first stage, the model is trained from scratch with binarized activation and real-valued convolution weights. We load the state dict from the first stage, and both the activation and weights are binarized in the second stage. The initial learning rate is set to $5e-4$, the same as one-stage training, and annealed to 0 by a linear descent scheduler. The base scaling coefficient $\eta$ is set to $0.001$. 

\textbf{PASCAL VOC. }On PASCAL VOC, we use ImageNet to pre-train the backbone of a 1-bit student, following \citep{liu2020reactnet}. The SGD optimizer is utilized, and the batch size is set as 4 for Faster-RCNN. We train the model in two stages. Only the backbone is binarized at the first stage. Then we binarize all layers in the second stage. Each stage counts 12 epochs. The learning rate is set as 0.004 and decays by multiplying 0.1 in the 9th and 11th epochs following \citep{xu2022ida_det}. The base scaling coefficient $\eta$ is set to $0.001$. 

\textbf{GLUE. }On GLUE, We follow \cite{liu2022bit} in adopting the experimental setting of \cite{devlin2019bert}. We use the Adam as our optimizer, and we take more training epochs for every quantization method on each tasks to have a sufficient training, which is 50 for CoLA, 20 for MRPC, STS-B and RTE, 10 for SST-2 and QNLI, 5 for MNLI and QQP. We distill binary models using full-precision teacher without using multi-distill technique. 

Furthermore, \textbf{\emph{we have provided our code}} in the supplementary materials, which contains the full implementation of our method and training scripts to facilitate easy replication and future research.

\section{Additional GLUE Analyses}
\label{appendix:additional_glue}

\subsection{Training under Fixed Wall-Clock Budgets}
\label{appendix:fixed_budget_glue}

To further examine whether the improvement of SURGE comes from more effective optimization
rather than simply longer training, we additionally evaluate BiT and BiT+SURGE under the same
fixed wall-clock budget. Specifically, for each GLUE task, both methods are trained on identical
hardware with the same task-specific training duration, and we compare the performance reached within that budget.

As shown in Table~\ref{tab:glue_fixed_budget}, BiT+SURGE still outperforms the BiT baseline
under the same time budget, improving the average GLUE score from 65.2 to 66.7. This indicates
that, despite its higher per-step cost, SURGE provides more effective optimization updates within
a controlled training-time budget.

\begin{table}[ht]
\centering
\caption{Results under fixed wall-clock budgets on GLUE. For each task, both methods are trained
on the same hardware with the same task-specific training duration. 
% The average score is computed over GLUE tasks by first averaging MNLI-m and MNLI-mm as one MNLI score.
}
\label{tab:glue_fixed_budget}
\resizebox{\linewidth}{!}{
\begin{tabular}{lcccccccccc}
\toprule
\textbf{Method} &
\makecell{\textbf{MNLI-m}\\\textbf{(10800s)}} &
\makecell{\textbf{MNLI-mm}\\\textbf{(10800s)}} &
\makecell{\textbf{QQP}\\\textbf{(7200s)}} &
\makecell{\textbf{QNLI}\\\textbf{(10800s)}} &
\makecell{\textbf{SST-2}\\\textbf{(7200s)}} &
\makecell{\textbf{CoLA}\\\textbf{(3600s)}} &
\makecell{\textbf{STS-B}\\\textbf{(1800s)}} &
\makecell{\textbf{MRPC}\\\textbf{(1500s)}} &
\makecell{\textbf{RTE}\\\textbf{(900s)}} &
\textbf{Avg.} \\
\midrule
BiT & 72.0 & 72.7 & 82.7 & 81.5 & 85.6 & 23.0 & 45.0 & 74.8 & 57.0 & 65.2 \\
BiT+SURGE & 72.1 & 72.3 & 83.3 & 82.4 & 86.5 & 21.4 & 54.3 & 78.7 & 54.5 & \textbf{66.7} \\
\bottomrule
\end{tabular}
}
\end{table}

\subsection{Effect of Explicit Auxiliary-Branch Alignment}
\label{appendix:auxiliary_alignment_glue}

SURGE uses an auxiliary full-precision branch to compensate for the truncated first-order gradient caused by binarization, while the detach trick prevents the auxiliary branch from changing the forward output value. Therefore, the auxiliary branch is designed to provide complementary gradient information, rather than to mimic the binary branch in the forward computation.

We further test a variant that explicitly encourages the auxiliary branch output to align with the binary branch output. As shown in Table~\ref{tab:glue_auxiliary_alignment}, this explicit alignment does not further improve the performance over BiT+SURGE. This suggests that forcing the auxiliary branch to behave too similarly to the binary branch may reduce the diversity of the compensation signal, thereby weakening its ability to correct the STE-induced gradient bias.

\begin{table}[ht]
\centering
\caption{GLUE dev-set results for BiT, BiT+SURGE, and BiT+SURGE+Align. All values are reported
in percentage form. 
% The average score is computed over GLUE tasks by first averaging MNLI-m and MNLI-mm as one MNLI score.
}
\label{tab:glue_auxiliary_alignment}
\resizebox{\linewidth}{!}{
\begin{tabular}{lcccccccccc}
\toprule
\textbf{Method} & \textbf{MNLI-m} & \textbf{MNLI-mm} & \textbf{QQP} & \textbf{QNLI} &
\textbf{SST-2} & \textbf{CoLA} & \textbf{STS-B} & \textbf{MRPC} & \textbf{RTE} & \textbf{Avg.} \\
\midrule
BiT & 77.0 & 77.5 & 85.4 & 85.5 & 87.8 & 23.6 & 68.0 & 79.4 & 58.1 & 70.6 \\
BiT+SURGE & 77.3 & 77.5 & 87.1 & 86.2 & 88.6 & 24.1 & 71.7 & 80.6 & 60.6 & \textbf{72.0} \\
BiT+SURGE+Align & 77.1 & 77.3 & 85.9 & 85.9 & 87.8 & 23.6 & 71.3 & 79.2 & 58.8 & 71.2 \\
\bottomrule
\end{tabular}
}
\end{table}

\section{Overhead and Deployment Efficiency}
\label{overhead}
SURGE introduces modest additional overhead during training while eliminating any extra inference cost, since the auxiliary branch is discarded after training.

\subsection{Training overhead. }
\textbf{CNNs. }We conduct a comparison of training time (10 epochs) and memory (batchsize 256/GPU) among SURGE, Bi-Real Net, ReActNet, and RBONN under one-stage training on ImageNet. As shown in Table~\ref{tab:overhead_one_stage}, our results demonstrate that while the full-precision branch introduces modest overhead (+25\% training time, +7.6\% memory vs RBONN), SURGE can deliver significant accuracy improvements against other SOTA methods (+0.63\% accuracy vs RBONN).

\begin{table}[ht]
\centering
\caption{Overhead comparison of training time (10 epochs) and memory (batchsize 256/GPU). Accuracy is reported after 10 epochs of training. * denotes a simple cost-reducing variant. }
\label{tab:overhead_one_stage}
\begin{tabular}{lccc}
\toprule
\textbf{Method} & \textbf{Training Time (min)} & \textbf{GPU Memory (2 GPUs)} & \textbf{Accuracy (\%)} \\
\midrule
Bi-Real Net & 143 & 19153 MiB$\times$2 & 38.73 \\
ReActNet & 156 & 20923 MiB$\times$2 & 45.86 \\
RBONN & 160 & 21005 MiB$\times$2 & 46.65 \\
\textbf{SURGE} & 200 & 22597 MiB$\times$2 & \textbf{47.28} \\
\textbf{SURGE*} & 177 & 22295 MiB$\times$2 & 47.21 \\
\bottomrule
\end{tabular}
\end{table}

\textbf{Transformers. }We conduct a comparison of training time (1 epoch) and memory consumption between SURGE and baseline (BiT) for BERT quantization on each task of GLUE. Following BiT, we employ task-specific batch sizes during training to optimize performance across different tasks. As shown in Table~\ref{tab:overhead_bert}, SURGE introduces acceptable additional training overhead (+17\% avg time, +22\% avg memory) while delivering significant accuracy improvements (+1.38\% avg accuracy). Notably, when the baseline employs larger batch sizes (32) with substantial memory consumption (12429 MB) on QQP dataset, SURGE adds only minimal additional overhead (+10\% memory).

\begin{table}[ht]
\centering
% \small
\caption{Comparison of Time, Memory, and Final Accuracy between Baseline and SURGE of training binarized BERT across GLUE tasks}
\label{tab:overhead_bert}
\begin{tabular}{l|l|c|c|c|c|c|c|c|c}
\hline
& \textbf{Method} & \textbf{CoLA} & \textbf{\makecell{MNLI\\(m/mm)}} & \textbf{MRPC} & \textbf{QNLI} & \textbf{QQP} & \textbf{RTE} & \textbf{SST-2} & \textbf{STS-B} \\
\hline
Batch size & --------- & 16 & 16 & 8 & 8 & 32 & 8 & 8 & 8 \\
\hline
\multirow{2}{*}{Time (1 epoch)/s} & Baseline & 107 & 5025 & 92 & 2590 & 3263 & 61 & 1583 & 144 \\
& SURGE & 122 & 6220 & 109 & 3050 & 4110 & 72 & 1755 & 158 \\
\hline
\multirow{2}{*}{Memory (MB)} & Baseline & 5701 & 8175 & 6051 & 6051 & 12429 & 6051 & 4617 & 6049 \\
& SURGE & 7227 & 9649 & 7475 & 7483 & 13731 & 7475 & 5957 & 7473 \\
\hline
\multirow{2}{*}{Final accuracy} & Baseline & 23.56 & 77.05/77.46 & 79.41 & 85.48 & 85.40 & 58.12 & 87.84 & 67.97 \\
& SURGE & 24.11 & 77.27/77.53 & 80.64 & 86.23 & 87.12 & 60.65 & 88.65 & 71.70  \\
\hline
\end{tabular}
\end{table}

Current BNN deployments predominantly target edge devices where inference efficiency is significant. Consequently, state-of-the-art methods in this domain prioritize two key metrics: (1) achievable accuracy under extreme quantization constraints, and (2) real-world inference latency on resource-limited hardware. Training efficiency remains secondary in established BNN research paradigms. Our method introduces modest additional overhead during training and does not impact deployment.

\subsection{Deployment efficiency. }
SURGE discards all auxiliary branches after training, maintaining identical resource requirements to standard binary networks while delivering stable accuracy gains. Based on our prior experience, we implement the 1-bit models on ODROID C4, which has a 2.016 GHz 64-bit quad-core ARM Cortex-A55. By evaluating its real speed in real-world mobile device, the deployment efficiency of SURGE is proven. We leverage the SIMD instruction SSHL on ARM NEON to make the inference library BOLT \citep{feng2021bolt} compatible with SURGE. We compare SURGE to the real-valued backbone in Table~\ref{tab:efficiency_comparison}. We can see that SURGE's inference speed is substantially faster with the highly efficient BOLT library. For example, the acceleration rate achieves about 4.1× on ResNet18.

\begin{table}[ht]
\centering
\caption{Deployment efficiency. }
% \small
% \setlength{\tabcolsep}{2.5pt}
\label{tab:efficiency_comparison}
\begin{tabular}{lcccccc}
\toprule
\textbf{Backbone} & \textbf{Method} & \textbf{\#bit (W/A)} & \textbf{Size (MB)} & \textbf{Memory Saving} & \textbf{Latency (ms)} & \textbf{Acceleration Rate} \\
\midrule
\multirow{2}{*}{ResNet-18} 
& Real-valued & 32/32 & 42.7 & -- & 276.8 & -- \\
& \textbf{SURGE} & \textbf{1/1} & \textbf{1.7} & \textbf{25.1$\times$} & \textbf{67.8} & \textbf{4.1$\times$} \\
\bottomrule
\end{tabular}
\end{table}

\section{Limitations and Future Works}
\label{limitations}
While SURGE improves gradient estimation through its dual-path design, this approach inherently requires the retention of auxiliary full-precision parameters $W_a^{(l)}$ throughout the training phase to enable gradient compensation. This architectural choice introduces two practical considerations: (1) a temporary increase of parameter memory footprint during backward propagation compared to conventional BNN implementations, and (2) additional computational overhead from parallel path gradient calculations during optimization. In Appendix~\ref{overhead}, we have conduct experiments to quantify the training overhead of binarizing CNNs and Transformers.  Notably, these costs are strictly confined to the training phase. During inference, the auxiliary branches are discarded, restoring the original binary architecture's computational efficiency and memory footprint without any residual overhead.

For architecture, our framework enables exploration of auxiliary structure designs (\eg, via efficient structure design, low-rank decomposition, or saliency-aware layer compensation) to reduce computational overhead. Here we propose a potential method to decrease training overhead. By simply replacing auxiliary convolutions from 3×3 to 1×1 kernels (SURGE* variant in Table~\ref{tab:overhead_one_stage}), we achieve 14.4\% training time reduction (177min vs 200min over 160min), 1.5\% memory savings (22295MB vs 22597MB over 21005MB), while still bringing accuracy improvements against other SOTA methods (+0.56\% accuracy vs RBONN). This validates that our framework inherently supports efficient re-engineering, and such architectural explorations constitute promising future directions.

% \section{Use of Large Language Models}
% We used large language models (LLMs) only to aid and polish writing (\eg, grammar, wording, and minor \LaTeX{} fixes). No algorithms, analyses, results, or claims were generated by LLMs; all technical content and decisions are by the authors. LLMs were not used to create/label data, and no evaluation items were exposed. Draft snippets were provided as prompts, and all outputs were manually reviewed before inclusion. 

%%%%%%%%%%%%%%%%%%%%%%%%%%%%%%%%%%%%%%%%%%%%%%%%%%%%%%%%%%%%%%%%%%%%%%%%%%%%%%%
%%%%%%%%%%%%%%%%%%%%%%%%%%%%%%%%%%%%%%%%%%%%%%%%%%%%%%%%%%%%%%%%%%%%%%%%%%%%%%%

\end{document}